\documentclass{article}

\usepackage[utf8]{inputenc}
% Packages
%\usepackage{authblk}
\usepackage[framemethod=tikz]{mdframed}
\usepackage{amsmath,amsthm,amssymb,epsfig, psfrag, titlesec}
\usepackage[margin=1in]{geometry}
\usepackage{mathtools}
\usepackage{amsfonts}
\usepackage{textcomp}
\usepackage{pgfplots}
\usepackage{graphicx}
\usepackage{wrapfig}
\usepackage[]{youngtab}
\usepackage{amsthm}
\usepackage{tikz}
\usepackage{tikz-cd}
\usepackage{stmaryrd}
\usepackage{enumerate}
\usepackage{bbm}
\usepackage{float}

\usepackage{eucal}
\usepackage{mathrsfs}
\pgfplotsset{width=10cm,compat=1.9}
\usepackage{hyperref}
\hypersetup{
    colorlinks=true,
    linkcolor=blue,
    citecolor=blue
}

\usepackage{algorithm}
\usepackage{verbatim}
\usepackage[noend]{algpseudocode}
\usepackage{xfrac}

%\usepackage{color-edits} % package for colored notes
%\addauthor{sr}{red}		% this defines a macro; invoke with \srcomment{..}

 \usepackage{thmtools}
\usepackage{thm-restate}

\declaretheorem[name=Lemma]{lemma}

\declaretheorem[name=Assumption]{assumption}

\declaretheorem[name=Remark,style=definition]{remark}
\declaretheorem[name=Proposition]{proposition}

% This needs to come after we include thmtools
\makeatletter
  \renewenvironment{proof}[1][Proof]%
  {%
   \par\noindent{\bfseries\upshape {#1.}\ }%
  }%
  {\qed\newline}
\makeatother

%Sections 
\titleformat{\section}
  {\normalfont\sffamily\large\bfseries\centering\uppercase}
  {\thesection.}{.5em}{}
\titleformat{\subsection}
  {\normalfont\sffamily\bfseries}
  {\thesubsection}{.5em}{}

%Theorems
\newtheoremstyle{TH1}
  {\topsep}%
  {\topsep}%
  {\normalfont}%
  {}%
  {\bfseries}% 
  {:}%
  {.5em}%
  {\thmname{#1}\thmnote{~(#3)}}%
\theoremstyle{TH1}

\newmdtheoremenv[
skipabove=\baselineskip,
skipbelow=\baselineskip,
hidealllines=true,
innertopmargin=6pt,
linewidth=4pt,
linecolor=gray!40,
backgroundcolor=gray!8
]{exmpl}{{\sf Example}}

\newcommand{\algoname}{\textsf{SquareCB.Submodular}}

\setlength\parindent{5mm}

\newcommand{\norm}[1]{\left\lVert#1\right\rVert}

\definecolor{lightgray}{gray}{0.5}

\newcommand{\cB}{\mathcal{B}}
\newcommand{\cC}{\mathcal{C}}
\newcommand{\cD}{\mathcal{D}}
\newcommand{\cF}{\mathcal{F}}

\newcommand{\cH}{\mathcal{H}}
\newcommand{\cI}{\mathcal{I}}
\newcommand{\cM}{\mathcal{M}}

\newcommand{\cT}{\mathcal{T}}
\newcommand{\cW}{\mathcal{W}}
\newcommand{\cX}{\mathcal{X}}

\newcommand{\reals}{\mathbb{R}}
\newcommand{\pred}{\widehat{y}}

\newcommand{\En}{\mathbb{E}}
\newcommand{\inner}[1]{\left\langle #1 \right\rangle}

\newcommand{\igw}{\mathsf{IGW}}
\renewcommand{\S}{\mathsf{S}}
\newcommand{\T}{\mathsf{T}}
\newcommand{\C}{\mathsf{C}}
\newcommand{\Shat}{\widehat{\S}}
\newcommand{\util}{\boldsymbol{u}}
\newcommand{\utilhat}{\widehat{\util}}

\newcommand{\vtil}{\boldsymbol{v}}

\newcommand{\tr}{\ensuremath{{\scriptscriptstyle\mathsf{\,T}}}} 

\newcommand{\Regsq}{\mathsf{Reg_{sq}}(\cF,n)}

\newcommand\blfootnote[1]{%
  \begingroup
  \renewcommand\thefootnote{}\footnote{#1}%
  \addtocounter{footnote}{-1}%
  \endgroup
}

\title{On Submodular Contextual Bandits}
\author{Dean P. Foster$^{\ast}$ \and Alexander Rakhlin$^{\dagger, \ast}$}

\begin{document}
\maketitle

\begin{abstract}
    We consider the problem of contextual bandits where actions are subsets of a ground set and mean rewards are modeled by an unknown monotone submodular function that belongs to a class $\cF$. We allow time-varying matroid constraints to be placed on the feasible sets. Assuming access to an online regression oracle with regret  $\Regsq$, our algorithm efficiently randomizes around local optima of estimated functions according to the Inverse Gap Weighting strategy \cite{abe1999associative, foster2020beyond}. We show that cumulative regret of this procedure with time horizon $n$ scales as $\mathcal{O}(\sqrt{n \Regsq})$ against a benchmark with a multiplicative factor $1/2$. On the other hand, using the techniques of \cite{filmus2014monotone} we show that an $\epsilon$-Greedy procedure with local randomization attains regret of $\mathcal{O}(n^{2/3} \Regsq^{1/3})$ against a stronger $(1-e^{-1})$ benchmark.
\end{abstract}

\section{Introduction}
\blfootnote{$^\ast$Amazon, $^\dagger$MIT}

In this short note, we consider the problem of contextual bandits with submodular rewards. On round $t=1,\ldots,n$, the learner observes context $x_t\in\cX$, chooses (in a randomized fashion) a subset $S_t\subseteq [A] \triangleq \{1,\ldots, A\}$ of size at most $k$, and observes a random reward $r_t$ with mean $\util^*(\S_t,x_t)$, for some unknown $\util^*\in\cF\subseteq \cF_{\textsf{nms}}$. Here  $\cF_{\textsf{nms}}$ is the set of all functions $\util:2^{[A]}\times \cX\to [0,1]$ such that $\util(\cdot, x)$ is nonnegative monotone submodular for each $x\in\cX$, and the model class $\cF\subseteq \cF_{\textsf{nms}}$ is known to the learner. We assume for simplicity that $r_t\in[0,1]$ almost surely.  In addition, we posit time-varying matroid constraints $\cM_t$ on the feasible sets $\S_t$, and allow these constraints to be revealed to the decision-maker at the beginning of each round. 
The goal of the learner is to minimize (with high probability) the regret
\begin{align}
    \label{eq:regret_def}
     \sum_{t=1}^n c\cdot \max_{\S^*_t} \util^*(\S^*_t,x_t) - \util^*(\S_t,x_t), 
\end{align}
the difference between a multiple of cumulative mean rewards of choosing the best subsets $\S_t^*$ of size $k$ (subject to matroid constraints omitted here for simplicity) and the mean utilities of the subsets chosen by the decision-maker, for a constant $c\leq 1$. We assume that $x_1,\ldots,x_n$ is an arbitrary sequence of contexts that may be chosen by Nature adaptively. Furthermore, as in \cite{foster2020beyond}, we assume access to an online regression oracle, as specified in the next section.

The setting can naturally model situations such as product selection at a fulfillment center, product search, news recommendation, or advertisement. In product search, there is a diminishing utility of adding another item to the list of responses to the search query, naturally modeled by the notion of submodularity. Similarly, adding an item to a stock at a fulfillment center has diminishing utility when similar items are already present.

In our formulation, the decision-maker does not have access to the function $\util^*$ and needs to balance  learning this function in the presence of a potentially high-dimensional context together with maximizing it. This exploration-exploitation tradeoff is well-understood for contextual bandits, but the exploration step is somewhat more delicate when actions are subsets of a ground set. Indeed, a na\"ive application of algorithms for contextual bandits incurs a computational and statistical cost of $\mathcal{O}\left( A^k \right)$. We show that this prohibitive cost can be avoided by an argument that leverages the structure of submodular functions.

The widely-studied problem of contextual bandits corresponds to $k=1$, while the top-$k$ problem studied in \cite{sen2021top} corresponds to a linear (or, modular)  $\util^*$; both of these minimize regret for $c=1$. For $c<1$, the objective \eqref{eq:regret_def} studied in this paper is significantly less ambitious as compared to that of \cite{pmlr-v40-Rakhlin15}; the latter paper exhibited a relaxation technique (in a non-bandit setting) for efficiently achieving---under certain assumptions---vanishing regret with $c=1$ despite NP hardness of the offline problem.

The first method in the present paper is an application of the \textsf{SquareCB} algorithm \cite{foster2020beyond}, with a few twists that exploit the structure of submodular functions. Let  $\utilhat_t:2^{A} \times \cX \to \reals$ denote our estimate of the utility at time $t$, constructed in an online manner, as described below. Given $x_t$, we compute a local maximum $\Shat_t$ of this function, in the sense that no swap of elements increases the (estimated) utility. We then randomize around this local maximum with the Inverse Gap Weighting strategy \cite{abe1999associative, foster2020beyond}, with actions being the \textit{local swap operations}. The regret guarantee of \textsf{SquareCB} with time-varying action sets then implies an approximate local optimum of the true utility $\util^*$, in a cumulative sense; submodularity of the aggregate utility then guarantees an approximation ratio for this solution. Our method naturally generalizes that of \cite{sen2021top} for linear utility, where the $k-1$ actions are chosen greedily and the last action is randomized among the remaining $A-(k-1)$ choices.

To make the setting more widely applicable, we allow matroid constraints: for each time step $t$, we insist that the allowed sets $\S_t$ are independent sets of a matroid $\cM_t=([A], \cI_t)$ defined over the common ground set $[A]$. The matroid $\cM_t$ is revealed together with context $x_t$ (and can be thought of as part of the context), and can be chosen arbitrarily and adaptively by Nature. As a simple example, consider the \textit{partition matroid} $\cM_t$ defined as follows. For a collection $\{P_i\subseteq [A]\}_{i=1}^m$ of disjoint sets and capacities $c_i\in[0,|P_i|]$, independent sets are those with intersection of size no more than $c_i$ with each set $P_i$; that is, $\cI_t=\{I\subseteq \cup_i P_i: |I\cap P_i|\leq c_i, i\in[m]\}$. The partition matroid can model, for instance, constraints on choosing multiple items from the same category for product search, and the sets $P_i$ and the capacities can themselves be chosen adaptively by Nature. We remark that our results can be further generalized to time-varying action sets and multiple matroid constraints per time step, but we leave these generalization out for simplicity of presentation. 

We end this section with a specific model for personalized user engagement (see also \cite{yue2008predicting, yue2011linear,radlinski2008learning,kveton2015cascading, zong2016cascading, li2019online} for related models).

\begin{exmpl}
    Given a permutation $\pi$ of $A$ items, user engagement is modeled in \cite{asadpour2020ranking,niazadeh2021online} as
    \begin{align}
        F(\boldsymbol{\pi}) = \sum_{i=1}^{A} \lambda_i f_i(\{\pi_1,\pi_2,\ldots,\pi_i\})
    \end{align}
    where $\pi_i$ is the $i$th element in the permutation $\boldsymbol{\pi}$, $\lambda_i\geq 0$ are known coefficients, and $f_i$ are known non-negative monotone submodular functions. Here $\lambda_i$ is interpreted as the probability that the (randomly drawn) user inspects items up to position $i$ and $f_i$ is the expected probability of a click given that the user inspects items up to position $i$.
    
    Consider a contextual version of this problem: to maximize (in an online manner, given a sequence of contexts) the function
    \begin{align}
        F(\boldsymbol{\pi}, x) = \sum_{i=1}^{A} \lambda_i(x) f_i(\{\pi_1,\pi_2,\ldots,\pi_i\},x),
    \end{align}
    where the functions $\lambda_i, f_i$ are to be learned online. For any $x$, the utility $F(\boldsymbol{\pi},x)$ can be viewed as a submodular function over a domain of size $A^2$ with a laminar matroid constraint, as described in \cite{asadpour2020ranking}. More precisely, the matroid is defined by taking $M_1,\ldots,M_A$ as $M_s=\{(i,j): i\leq s, 1\leq j\leq A\}$ and the independent sets $I$ as those satisfying the capacity constraint $|I\cap M_s| \leq s$. Then the submodular function corresponding to $F(\boldsymbol{\pi},x)$ can be defined, with respect to the ground set of elements  $[A]\times[A]$, as
    \begin{align}
		\label{eq:example_ranking}
		\util^*(S,x) = \sum_{i=1}^{A} \lambda_i(x) f_i(T_i,x), ~~~~ S\subseteq [A]\times [A],
	\end{align}
    where $T_i=\{j ~|~ \exists k\leq i ~\text{s.t.}~ (k,j)\in S\} $.
\end{exmpl}

\paragraph{Prior work}

In the past two decades, submodularity has been successfully employed in a great number of machine learning domains \cite{krause2007near, yue2008predicting,  leskovec2007cost, lin2012learning, krause2014submodular}. Linear submodular bandits have been studied in \cite{yue2011linear} in a different feedback model where the set is built progressively with noisy feedback at each stage (see also the generalization to RKHS in \cite{chen2017interactive}). Adaptive submodular bandits have been studied in \cite{gabillon2013adaptive, esfandiari2021adaptivity}, yet the setting is different from the one studied in this paper. In contrast to the aforementioned papers, in the present formulation the decision-maker selects a set and receives noisy feedback for this single choice. In particular, this means that the estimate $\utilhat_t$ of the submodular function at smaller subsets may be unreliable. Additional difficulty stems from the fact that UCB-style methods cannot be employed for contextual bandits with general function approximation \cite{foster2020beyond}, and closeness of the regression estimate $\utilhat_t$ and the unknown model $\util^*$ cannot be ensured uniformly for all sets even with the method of \cite{foster2020beyond}. 

To this end, we would like to highlight again our proof technique, motivated by no-internal-regret statements in online learning: by randomizing around a local optimum of our estimated function $\utilhat_t$ according to the IGW strategy, we obtain a regret guarantee which itself can be interpreted as a statement about the degree to which the chosen sets are, on average, local maxima of $\util^*$. We remark that this simple  approach can be used for nonconvex contextual bandit problems beyond submodularity, as long as local maxima are guaranteed to be within a multiplicative factor from the optimum.

This paper is organized as follows. In Section~\ref{sec:structure}, we state the main structural results that imply global approximation ratios for local optima. In Section~\ref{sec:algos}, we present the two contextual bandit algorithms. Finally, in Section~\ref{sec:examples}, we develop efficient online regression oracles for several models.

\section{Structure of Local Optima}
\label{sec:structure}

Recall that a matroid $\cM=([A], \cI)$ is defined via a collection $\cI=\cI(\cM)$ of independent sets. Given $\S\in\cI$, we denote by $\cI(\S)\subseteq \cI$ the set of independent sets obtained by swapping one element out and a new element in, together with the set $\S$:
\begin{align}
\label{eq:indep_set}
    \cI(\S)=\left\{\S'\in\cI: |\S'|=|\S|, |\S\setminus \S'|= |\S'\setminus \S|= 1\right\}\cup \{\S\}.
\end{align}
Let $\cB(\cM)$ be the set of bases (maximal independent sets) of $\cM$. Similarly to \eqref{eq:indep_set}, define $\cB(\S)\subseteq \cB(\cM)$ to be the collection of bases of the matroid that differ from $\S$ in at most one element. Recall that all the bases have the same cardinality, known as the rank of the matroid. 

We remark that in the case of no matroid constraint, we take $\cI$ as the set of all sets and $\cI(\S)$ as the set of all sets of the same cardinality as $\S$ that differ in at most one element.

The following Lemma follows from the classical results on monotone submodular optimization with matroid constraints (\cite{fisher1978analysis}, see also \cite[Lemma 2]{lee2009non}).
\begin{lemma}
\label{lem:local_opt}
    Let $\util_1,\ldots,\util_n$ be submodular nonnegative monotone functions and $\cM_j=([A],\cI_j)$, $j=1,\ldots,n$, be matroids on $[A]$. Let $(\S_1,\ldots,\S_n)$,  with $\S_j\in\cI_j$ and $|\S_j|=k$,  be a joint local $\varepsilon$-optimum, in the sense that
    \begin{align}
\label{eq:def:joint_opt}
    \sum_{j=1}^n \util_j(\S_j) \geq \sum_{j=1}^n \max_{\T_j \in \cI_j(\S_j)} \util_j (\T_j) - n\varepsilon.
\end{align}
    Then
    \begin{align}
        \label{eq:optimality_of_loc}
        \sum_{j=1}^n \max_{\T_j \in \cI_j(\S_j)} \util_j (\T_j) \geq \sum_{j=1}^n \util_j(\S_j) \geq \frac{1}{2}\left(\sum_{j=1}^n \max_{\T_j\in\cI_j} \util_j(\T_j) -nk\varepsilon \right).
    \end{align}
\end{lemma}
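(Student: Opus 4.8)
The plan is to prove both inequalities in \eqref{eq:optimality_of_loc} coordinatewise and then sum. The left inequality is immediate: since $\S_j \in \cI_j(\S_j)$ by the definition \eqref{eq:indep_set} of $\cI_j(\S_j)$, we have $\max_{\T_j \in \cI_j(\S_j)} \util_j(\T_j) \geq \util_j(\S_j)$ for every $j$. For the right inequality, write $\Delta_j := \max_{\T_j \in \cI_j(\S_j)} \util_j(\T_j) - \util_j(\S_j) \geq 0$; the joint local $\varepsilon$-optimality \eqref{eq:def:joint_opt} is precisely the statement $\sum_{j=1}^n \Delta_j \leq n\varepsilon$, and for every individual feasible swap $\T_j \in \cI_j(\S_j)$ we get the per-swap bound $\util_j(\T_j) - \util_j(\S_j) \leq \Delta_j$. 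It therefore suffices to establish, for each fixed $j$ and each maximizer $\mathcal{O}_j \in \arg\max_{\T_j \in \cI_j}\util_j(\T_j)$, the single-function local-to-global estimate
\[
    \util_j(\mathcal{O}_j) \leq 2\,\util_j(\S_j) + k\,\Delta_j ,
\]
since summing over $j$ and invoking $\sum_j \Delta_j \leq n\varepsilon$ gives the claim after rearranging.

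The structural heart of the argument---and the step I expect to be the main obstacle---is producing \emph{one} feasible swap for each element of $\mathcal{O}_j$ that can be certified via local optimality, without reusing any element of $\S_j$. Here the matroid hypothesis is essential. Since $\util_j$ is monotone, the maximizer $\mathcal{O}_j$ may be taken to be a base of $\cM_j$, and $\S_j$ is a size-$k$ independent set; treating $\cM_j$ as a rank-$k$ matroid (equivalently, after $k$-truncation, which encodes the cardinality constraint $|\S_j|=k$) makes $\S_j$ and $\mathcal{O}_j$ bases of a common matroid. Brualdi's symmetric basis-exchange theorem then supplies a bijection $\phi : \mathcal{O}_j \setminus \S_j \to \S_j \setminus \mathcal{O}_j$ with $(\S_j \setminus \{\phi(o)\}) \cup \{o\} \in \cI_j$ for every $o \in \mathcal{O}_j \setminus \S_j$. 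Each such set differs from $\S_j$ by a single swap and thus lies in $\cI_j(\S_j)$, so local optimality applies. Writing $\util_j(e \mid X) := \util_j(X \cup \{e\}) - \util_j(X)$ for marginal gains, the per-swap bound rearranges to
\[
    \util_j\big(o \mid \S_j \setminus \{\phi(o)\}\big) - \util_j\big(\phi(o) \mid \S_j \setminus \{\phi(o)\}\big) \leq \Delta_j \qquad \text{for each } o \in \mathcal{O}_j \setminus \S_j .
\]

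With the bijection in hand, the remainder is a routine chain of submodular inequalities. Monotonicity and submodularity give $\util_j(\mathcal{O}_j) - \util_j(\S_j) \leq \util_j(\mathcal{O}_j \cup \S_j) - \util_j(\S_j) \leq \sum_{o \in \mathcal{O}_j \setminus \S_j} \util_j(o \mid \S_j)$, and submodularity again lets me pass to the smaller conditioning set, $\util_j(o \mid \S_j) \leq \util_j(o \mid \S_j \setminus \{\phi(o)\})$. Applying the per-swap bound and re-indexing through the bijection $\phi$ bounds the right-hand side by $\sum_{s \in \S_j \setminus \mathcal{O}_j} \util_j(s \mid \S_j \setminus \{s\}) + |\mathcal{O}_j \setminus \S_j|\,\Delta_j$. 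The final ingredient is the elementary ``leave-one-out'' inequality $\sum_{s \in T} \util_j(s \mid \S_j \setminus \{s\}) \leq \util_j(\S_j) - \util_j(\S_j \setminus T) \leq \util_j(\S_j)$, valid for $T = \S_j \setminus \mathcal{O}_j$, which follows by telescoping $\util_j(\S_j) - \util_j(\S_j \setminus T)$ along any ordering of $T$, bounding each term below by the corresponding single-element marginal via submodularity, and using nonnegativity. Combining these with $|\mathcal{O}_j \setminus \S_j| \leq k$ yields $\util_j(\mathcal{O}_j) - \util_j(\S_j) \leq \util_j(\S_j) + k\,\Delta_j$, the desired single-function bound.

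The only genuinely delicate points are thus the exchange bijection---where the matroid structure cannot be dispensed with, since for a general down-closed family no such injective pairing of swaps need exist and the factor $1/2$ would fail---and the bookkeeping ensuring $\S_j$ and $\mathcal{O}_j$ are bases of equal rank so that Brualdi's theorem applies. Monotonicity disposes of the latter by letting us take the maximizer to be a base, and the $k$-truncation remark aligns the ranks; everything else reduces to submodular telescoping.
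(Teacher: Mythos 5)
Your proposal is correct and takes essentially the same approach as the paper: the identical basis-exchange bijection (Brualdi / Schrijver's Corollary 39.12a, the step the paper also leans on) followed by the same chain of submodular telescoping and leave-one-out inequalities. The only difference is bookkeeping—you decouple into per-$j$ gaps $\Delta_j$ and sum at the end, while the paper keeps the sum over $j$ throughout and invokes the joint optimality condition once per swap index $i\in[k]$; the two are equivalent.
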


Lemma~\ref{lem:local_opt} guarantees a $1/2$ approximation ratio for the ``on average'' local optima under the matroid constraints. We now turn to the techniques developed in 
\cite{filmus2014monotone, sviridenko2017optimal} for improving this ratio to $1-1/e$ at the cost of requiring more information about the structure of the submodular function, and in particular its values on smaller subsets of a given set of interest. Define an operator $\cT$ that sends a submodular function $\vtil:2^{[A]}\to \reals$ to a submodular function $\cT \vtil:2^{[A]}\to \reals$ defined as a weighted sum of $\vtil$ on subsets:
\begin{align}
    \cT \vtil (\S) = \sum_{\T\subseteq \S} w_{|\S|,|\T|} \cdot \vtil(\T),
    ~~~ w_{s,t} = \int_{0}^1 \frac{e^p}{e-1} p^{s-1} (1-p)^{s-t} dp, ~~~ s\geq t \geq 1.
\end{align}
Letting $\tau(S) = \sum_{\T\subseteq \S}  w_{|\S|,|\T|}, $
the normalization factor that only depends on the  cardinality of $\S$, we can define a distribution $\cD_\S$ supported on subsets $\T\subseteq \S$ such that  
\begin{align}
	\label{def:dist_subsets}
    \cD_\S \left(\T\right)= \frac{w_{|\S|,|\T|}}{\tau(\S)}\cdot\mathsf{1}\{\T\subseteq \S\}.
\end{align} 
Then, trivially,
\begin{align}
	\label{eq:def_cTvtil}
    \cT \vtil(\S) = \tau(\S) \cdot \En_{\T\sim \cD_\S} \vtil(\T).
\end{align}

With this notation in place, we have the following version of Lemma~\ref{lem:local_opt}:
\begin{lemma}
    \label{lem:local_opt_T}
    Let $\util_1,\ldots,\util_n$ be submodular nonnegative monotone functions and $\cM_j=([A],\cI_j)$, $j=1,\ldots,n$, be matroids on $[A]$. Let $(\S_1,\ldots,\S_n)$,  with $\S_j\in\cB_j(\cM_j)$ and $|\S_j|=k$,  be a joint local $\varepsilon$-optimum with respect to $\cT\util_j$'s, in the sense that
    \begin{align}
\label{eq:def:joint_opt_T}
    \sum_{j=1}^n \cT\util_j(\S_j) \geq \sum_{j=1}^n \max_{\T_j \in \cB_j(\S_j)} \cT\util_j (\T_j) - n\varepsilon.
\end{align}
    Then
    \begin{align}
        \label{eq:optimality_of_loc_T}
        \sum_{j=1}^n \max_{\T_j \in \cB_j(\S_j)} \util_j (\T_j) \geq \sum_{j=1}^n \util_j(\S_j) \geq (1-e^{-1})\left(\sum_{j=1}^n \max_{\T_j\in\cB_j} \util_j(\T_j) -nk\varepsilon \right).
    \end{align}
\end{lemma}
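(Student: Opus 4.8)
The plan is to mirror the proof of Lemma~\ref{lem:local_opt} but to replace the single-swap argument by the non-oblivious local search analysis of \cite{filmus2014monotone, sviridenko2017optimal}, for which the operator $\cT$ is tailor-made. The left inequality in \eqref{eq:optimality_of_loc_T} is immediate, since $\S_j\in\cB_j(\S_j)$ (a base differs from itself in zero elements), so $\max_{\T_j\in\cB_j(\S_j)}\util_j(\T_j)\geq\util_j(\S_j)$ term by term. All the content is in the right inequality. Fix, for each $j$, an optimal base $O_j\in\arg\max_{\T_j\in\cB_j}\util_j(\T_j)$, and let $\phi_j:O_j\setminus\S_j\to\S_j\setminus O_j$ be a matroid base-exchange (Brualdi) bijection, so that $\S_j-\phi_j(o)+o\in\cB_j(\S_j)$ for every $o\in O_j\setminus\S_j$; this is exactly why the lemma is phrased over bases and their single-swap neighborhoods rather than over general independent sets.

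The crux is a single-function structural inequality: for a base $\S$, the optimal base $O$, and the bijection $\phi$,
\begin{align}
\label{eq:fw_struct}
\sum_{o\in O\setminus\S}\left[\cT\util(\S-\phi(o)+o)-\cT\util(\S)\right]\ \geq\ (1-e^{-1})\,\util(O)-\util(\S).
\end{align}
To obtain this I would expand each difference $\cT\util(\S-\phi(o)+o)-\cT\util(\S)$ through \eqref{def:dist_subsets}--\eqref{eq:def_cTvtil} into a weighted combination of discrete marginals $\util(\T+o)-\util(\T)$ over subsets $\T\subseteq\S$; monotonicity and submodularity of $\util$ then lower-bound the relevant marginals by their contribution toward $\util(O)$ (using $\util(O\cup\S)\geq\util(O)$ by monotonicity), and the integral defining $w_{s,t}$, with its factor $e^p/(e-1)$, accumulates these contributions along the continuous-greedy trajectory parametrized by $p\in[0,1]$, converting the constant $1/2$ of the oblivious analysis into $1-e^{-1}=(e-1)/e$.

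With \eqref{eq:fw_struct} in hand the aggregation is short. Summing \eqref{eq:fw_struct} over $j=1,\dots,n$ gives
\begin{align*}
\sum_{j=1}^n\sum_{o\in O_j\setminus\S_j}\left[\cT\util_j(\S_j-\phi_j(o)+o)-\cT\util_j(\S_j)\right]\ \geq\ (1-e^{-1})\sum_{j=1}^n\util_j(O_j)-\sum_{j=1}^n\util_j(\S_j).
\end{align*}
For the matching upper bound, each swapped set is a competitor in \eqref{eq:def:joint_opt_T}, so $\cT\util_j(\S_j-\phi_j(o)+o)\leq\max_{\T_j\in\cB_j(\S_j)}\cT\util_j(\T_j)$; since there are at most $k$ swaps per $j$ and every term $\max_{\T_j}\cT\util_j(\T_j)-\cT\util_j(\S_j)$ is nonnegative, the aggregate is at most $k\sum_j[\max_{\T_j}\cT\util_j(\T_j)-\cT\util_j(\S_j)]\leq k\cdot n\varepsilon$ by the joint $\varepsilon$-optimality \eqref{eq:def:joint_opt_T}. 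Comparing the two bounds and rearranging gives $\sum_j\util_j(\S_j)\geq(1-e^{-1})\sum_j\util_j(O_j)-nk\varepsilon$; to land the slightly stronger form \eqref{eq:optimality_of_loc_T}, with the error inside the factor, I would carry the $\varepsilon$-slack through \eqref{eq:fw_struct} from the outset rather than bounding it afterward.

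The main obstacle is establishing \eqref{eq:fw_struct}: this is precisely where the weights $w_{s,t}$ must be shown to do their work, and it is the only place submodularity is used quantitatively. Matching the integral $\int_0^1\frac{e^p}{e-1}p^{s-1}(1-p)^{s-t}\,dp$ to the continuous-greedy potential so that the approximation constant comes out to $1-e^{-1}$ rather than $1/2$ is the delicate calculation, and I would lean on \cite{filmus2014monotone, sviridenko2017optimal} for it; the matroid exchange and the aggregation above are then routine.
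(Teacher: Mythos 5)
Your overall route is the same as the paper's own proof: the left inequality is immediate from $\S_j\in\cB_j(\S_j)$; for the right inequality you take the Brualdi base-exchange bijection between $\S_j$ and an optimal base $\T_j$, invoke a Filmus--Ward-type structural inequality relating $\util_j(\S_j)$, $\util_j(\T_j)$, and the $\cT\util_j$ swap gaps, bound the sum of the (at most $k$) swap gaps by $k\left(\max_{\T\in\cB_j(\S_j)}\cT\util_j(\T)-\cT\util_j(\S_j)\right)$, and sum over $j$ using \eqref{eq:def:joint_opt_T}. The bijection step and the aggregation step match the paper exactly.

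The gap is the crux inequality, which you state without proof and in a form that is not what the literature supplies. The paper's Lemma~\ref{lem:gaps_subm_lin} (quoted from \cite{filmus2014monotone}) reads
\begin{align*}
\util(\S) \;\geq\; (1-e^{-1})\util(\T) \;+\; (1-e^{-1})\sum_{i=1}^k \left[ \cT\util(\S) - \cT\util(\S-s_i+t_i)\right],
\end{align*}
which, rearranged, gives
\begin{align*}
\sum_{i=1}^k \left[\cT\util(\S-s_i+t_i) - \cT\util(\S)\right] \;\geq\; \util(\T) - \frac{1}{1-e^{-1}}\,\util(\S).
\end{align*}
Your right-hand side, $(1-e^{-1})\util(\T)-\util(\S)$, is exactly $(1-e^{-1})$ times this one. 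Hence your inequality follows from Filmus--Ward only when $\util(\T)\geq \frac{1}{1-e^{-1}}\util(\S)$; in the complementary regime $\util(\S)>(1-e^{-1})\util(\T)$ --- precisely the near-local-optimum regime this lemma concerns, where the swap-gap sum can be negative --- your version demands strictly more than the cited result provides, so ``leaning on'' \cite{filmus2014monotone,sviridenko2017optimal} does not produce it, and your sketch does not establish it independently. The symptom you yourself flagged is the tell: with your form, the aggregation yields $\sum_j \util_j(\S_j)\geq(1-e^{-1})\sum_j\util_j(\T_j)-nk\varepsilon$, with the error \emph{outside} the factor, which is weaker than \eqref{eq:optimality_of_loc_T}. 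With the correct form there is nothing to ``carry through'': per $j$,
\begin{align*}
(1-e^{-1})\util_j(\T_j)-\util_j(\S_j) \;\leq\; (1-e^{-1})\sum_{i=1}^k\left[\cT\util_j(\S_j-\pi_j(t_i)+t_i)-\cT\util_j(\S_j)\right] \;\leq\; (1-e^{-1})\,k\left(\max_{\T\in\cB_j(\S_j)}\cT\util_j(\T)-\cT\util_j(\S_j)\right),
\end{align*}
and summing over $j$ and applying \eqref{eq:def:joint_opt_T} lands $nk\varepsilon$ inside the $(1-e^{-1})$ factor automatically, giving \eqref{eq:optimality_of_loc_T} verbatim. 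So the fix is not extra bookkeeping in your aggregation, but transcribing the structural lemma in its correct form.
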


\section{Contextual Bandit Algorithms}
\label{sec:algos}

In this section, we present two algorithms that attain sublinear regret in \eqref{eq:regret_def} whenever the associated online regression problem has sublinear regret. The first algorithm, based on \textsf{SquareCB}, achieves regret with rate $\mathcal{O}(\sqrt{n \Regsq})$ and constant $c=1/2$, while the second method, $\epsilon$-Greedy, attains the slower $\mathcal{O}(n^{2/3} \Regsq^{1/3})$ regret rate but against a stronger benchmark with $c=1-e^{-1}$. Whether the first or the second result is stronger overall depends on the relative scales of the benchmark growth and $n$.\footnote{In practice one may interpolate between \textsf{SquareCB} and $\epsilon$-Greedy with a single extra parameter multiplying $2K$ in the definition of the Inverse Gap Weighting strategy in \eqref{eq:igw}.}

Let us now explain the difference in the two approaches and the reason for the differing rates. The first method uses a regression oracle that can predict rewards for sets of size $k$ (see Assumption~\ref{assm:oracle} below). Since the algorithm randomizes around a local optimum of the estimated function, it cannot distinguish good local optima (those with a higher approximation ratio of $1-e^{-1}$) from those with a $1/2$ approximation ratio. The second algorithm, on the other hand, explores subsets of a local optimum and thus obtains more information about the quality of the set. To be more precise, we follow  \cite{filmus2014monotone,sviridenko2017optimal} and define a surrogate potential function which eliminates the worse local optima. Computing this surrogate, however, requires a stronger oracle (Assumption~\ref{assm:oracle2}) that can estimate rewards for sets of size less than or equal to $k$. Furthermore, this extra exploration step comes at a cost of a worse rate of $n^{2/3}$.

\subsection{\textsf{SquareCB} for submodular optimization}

Following \cite{foster2020beyond}, we assume availability of an online regression oracle. In Section~\ref{sec:examples} we discuss several models for which computationally efficient oracles can be derived.
\begin{assumption}
\label{assm:oracle}
    There is an online regression algorithm for sequentially choosing $\widehat{r}_1, \ldots,\widehat{r}_n$ with $\widehat{r}_t=\widehat{r}_t(x_1,\S_1,r_1,\ldots,x_{t-1}, \S_{t-1}, r_{t-1}, x_t, \S_t)$ such that for any adaptively chosen sequence  $(x_1,\S_1),\ldots,(x_n,\S_n) \in \cX\times 2^{[A]}$ with $|\S_t|=k$ and outcomes $r_1,\ldots,r_n \in [0,1]$,
    \begin{align}
        \sum_{t=1}^n (\widehat{r}_t-r_t)^2 \leq \min_{\util \in \cF} \sum_{t=1}^n (\util(\S_t, x_t) - r_t)^2 + \Regsq.
        \label{eq:assm:oracle1}
    \end{align}
    Alternatively, a sufficient (weaker) assumption is that for any adaptively chosen sequence  $(x_1,\S_1),\ldots,(x_n,\S_n)$,
    \begin{align}
        \sum_{t=1}^n (\widehat{r}_t-\util^*(\S_t,x_t))^2 \leq \Regsq.
        \label{eq:assm:oracle11}
    \end{align}
\end{assumption}
We assume that the online oracle can be queried for the value of $\widehat{r}_t$ multiple times for various values of $(\S,x_t)$, and we will denote these values by $\widehat{\util}_t(\S,x_t)$.

Given a vector of scores $\pred\in [0,1]^K$ and a parameter $\gamma>0$, the Inverse Gap Weighting  distribution $p=\igw_{\gamma}(\pred)$ over $\{1,\ldots,K\}$ is defined as follows. Let $b=\arg\max_{i\in[K]} \pred(i)$. Define 
\begin{align}
\label{eq:igw}
    p(a) = \frac{1}{2K + \gamma (\pred(b)-\pred(a))}, ~~~ a \neq b
\end{align}
and $p(b) = 1-\sum_{a\neq b} p(a)$. This definition differs only slightly from the $\igw$ distribution considered in \cite{foster2020beyond}, which uses  $K$ rather than $2K$ in the denominator. The definition in \eqref{eq:igw} ensures that the  probability placed on the action with the highest score is at least $1/2$. Tracing through the proof in \cite[Lemma 3]{foster2020beyond}, we note that the strategy in \eqref{eq:igw} enjoys the same upper bound of $2K/\gamma$ for \eqref{eq:igw} as the one stated in \cite{foster2020beyond}: 
\begin{lemma}[adapted from \cite{foster2020beyond}]
\label{lem:igw}
    For any vector $\pred\in [0,1]^K$, the distribution $p=\igw_{\gamma}(\pred)$ in \eqref{eq:igw} ensures that for any $f^*\in[0,1]^K$,
\begin{align}
	\label{eq:igw_per_round}
	\sum_{a\in [K]} p(a) \left( \max_{a^*\in [K]} f^*(a^*)-f^*(a)  - \frac{\gamma}{4} (\pred(a)-f^*(a))^2 \right)\leq \frac{2K}{\gamma}
\end{align}
for $K\geq 2$.
\end{lemma}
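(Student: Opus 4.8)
The plan is to prove the per-round inequality \eqref{eq:igw_per_round} directly, following the structure of \cite[Lemma 3]{foster2020beyond} but tracking the effect of replacing $K$ by $2K$ in the denominator of \eqref{eq:igw}. Fix $\pred\in[0,1]^K$ and $f^*\in[0,1]^K$, let $b=\arg\max_i\pred(i)$ and $a^*=\arg\max_i f^*(i)$. The left-hand side splits as a sum of two regret-type terms: the \emph{prediction regret} $\sum_a p(a)(\pred(b)-\pred(a))$, which the IGW weights are engineered to control, plus the gap between true and predicted maxima, against which we charge the quadratic penalty term $\frac{\gamma}{4}\sum_a p(a)(\pred(a)-f^*(a))^2$. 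So first I would write, for each $a$,
\begin{align*}
\max_{a^*} f^*(a^*) - f^*(a) \le (f^*(a^*)-\pred(a^*)) + (\pred(b)-\pred(a)) + (\pred(a)-f^*(a)),
\end{align*}
using $\pred(a^*)\le\pred(b)$.

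Next I would bound the expectation of the right-hand side under $p$. The middle term $\sum_a p(a)(\pred(b)-\pred(a))$ is handled by the defining property of IGW: for $a\neq b$ we have $p(a)(\pred(b)-\pred(a)) = \frac{\pred(b)-\pred(a)}{2K+\gamma(\pred(b)-\pred(a))} \le \frac{1}{\gamma}$, and summing over the $K-1$ such terms (the $a=b$ term vanishes) gives at most $(K-1)/\gamma \le K/\gamma$. For the first term $f^*(a^*)-\pred(a^*)$ and the last term $\pred(a)-f^*(a)$, the idea is to absorb each linear deviation into the quadratic penalty via the elementary inequality $uv \le \frac{\gamma}{4}v^2 + \frac{1}{\gamma}u^2$-style completion of the square (equivalently AM–GM), so that after subtracting $\frac{\gamma}{4}\sum_a p(a)(\pred(a)-f^*(a))^2$ only a residual of order $1/\gamma$ remains. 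The constant term $f^*(a^*)-\pred(a^*)$ requires relating $\pred(a^*)$ back to the action $a^*$'s own prediction error; here the reason the proof needs the probability on $b$ to be at least $1/2$ (which is exactly what $2K$ rather than $K$ buys) is to lower-bound $p(a^*)$ or to pay for the exploration deficit when $a^*$ is rarely sampled, so that this constant term can still be charged to $p(a^*)(\pred(a^*)-f^*(a^*))^2$ up to an $O(1/\gamma)$ cost.

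The main obstacle, and the step I would spend the most care on, is this last accounting: showing that the two deviation terms together contribute at most an additional $O(K/\gamma)$ after the quadratic penalty is subtracted, \emph{and} verifying that the total constant is exactly $2K/\gamma$ rather than something larger. This is where the specific choice of $2K$ in the denominator matters. The cleanest route is to mimic the telescoping/casework in \cite[Lemma 3]{foster2020beyond} verbatim: since that proof yields the bound $2K/\gamma$ with the denominator $K$ replaced appropriately, and our denominator only increased (making each $p(a)$, $a\neq b$, smaller and hence the prediction-regret term smaller while keeping the exploration floor at $1/2$), the same chain of inequalities goes through with the constant unchanged. I would therefore state that tracing through the argument of \cite{foster2020beyond} line by line, the only modification is the substitution of $2K$ for $K$ in \eqref{eq:igw}, and the resulting right-hand side bound of $2K/\gamma$ holds for all $K\ge 2$, with the case $K\ge 2$ needed to ensure $p(b)\in[1/2,1)$ is well defined and the denominators are positive.
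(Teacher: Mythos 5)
Your skeleton is the right one---the three-way decomposition through $\pred(b)$ and the bound $\sum_{a\neq b}p(a)(\pred(b)-\pred(a))\leq (K-1)/\gamma$ are exactly the first steps of \cite[Lemma 3]{foster2020beyond}, and the paper itself offers nothing beyond the same ``trace through the original proof'' deferral that you end with. The problem is that the deferred step is precisely the one the substitution $K\to 2K$ changes, and your stated reason it survives (the denominator only increased, making each $p(a)$ smaller, so the same chain of inequalities goes through with the constant unchanged) is backwards for the term that matters. Absorbing $f^*(a^*)-\pred(a^*)$ into the quadratic penalty at $a^*$ costs a term of order $\frac{1}{\gamma p(a^*)}$, which \emph{increases} when $p(a^*)$ decreases. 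In the original argument this cost is controlled by the exact identity $\frac{1}{\gamma p(a^*)}=\frac{K}{\gamma}+(\pred(b)-\pred(a^*))$, whose gap part is cancelled by the $-(\pred(b)-\pred(a^*))$ picked up when replacing $\pred(a^*)$ by $\pred(b)$; the surviving constant $K/\gamma$, plus the prediction-regret term $(K-1)/\gamma$, is what produces $2K/\gamma$. With $2K$ in the denominator the identity becomes $\frac{1}{\gamma p(a^*)}=\frac{2K}{\gamma}+(\pred(b)-\pred(a^*))$, so this one term already consumes the entire claimed budget, and the same chain of inequalities yields roughly $3K/\gamma$, not $2K/\gamma$. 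Your other justification, that the floor $p(b)\geq 1/2$ is needed to ``pay for the exploration deficit at $a^*$,'' is also misplaced: that floor enters only the easy case $a^*=b$, and its actual role in the paper is in the proof of Proposition~\ref{prop:main}, where the left-hand side of \eqref{eq:e1} is lower bounded by \eqref{eq:e2}; it cannot rescue the case $a^*\neq b$ here.

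No argument can close this gap, because the inequality with $2K/\gamma$ is false for the weights \eqref{eq:igw} once $K\geq 4$. Concretely, take $K=4$, $\gamma=100$, $\pred=(0.87,\,0.86,\,0.02,\,0.02)$, so that $b=1$ and $p=\bigl(1-\tfrac{1}{9}-\tfrac{2}{93},\,\tfrac{1}{9},\,\tfrac{1}{93},\,\tfrac{1}{93}\bigr)$, and $f^*=(0.85,\,1,\,0,\,0)$, so that $a^*=2$: the left-hand side of \eqref{eq:igw_per_round} evaluates to approximately $0.0883$, while $2K/\gamma=0.08$. What tracing through \cite{foster2020beyond} actually gives for \eqref{eq:igw} is the bound $\frac{3K-2}{\gamma}$: for each fixed position of the maximizer $a^*$, the left-hand side of \eqref{eq:igw_per_round} is concave in $f^*$, and maximizing it exactly gives the value $\sum_{a\neq b}p(a)(\pred(b)-\pred(a))+\frac{2K-1}{\gamma}\leq\frac{3K-2}{\gamma}$ when $a^*\neq b$ (the gap terms cancel as above), and at most $\frac{K}{\gamma}$ when $a^*=b$ (this is where the $1/2$ floor helps). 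So the lemma should be stated with, say, $\frac{3K}{\gamma}$ on the right-hand side; this propagates through \eqref{eq:e1}--\eqref{eq:e5} and changes only absolute constants in Proposition~\ref{prop:main}, leaving the rate intact---but neither your argument nor the paper's one-sentence justification establishes the claimed $2K/\gamma$.
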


\begin{algorithm}
\caption{\algoname}\label{alg}
\textbf{input:}  Parameter $\gamma>0$.
% \textbf{algorithm:}
\begin{algorithmic}[0]
    \For{$t=1,\ldots,n$}
    \State Receive $x_t$ and matroid $\cM_t=([A], \cI_t)$
    \State Find a local optimum $\Shat_t$ of $\utilhat_t(\cdot, x_t)$ with respect to $\cM_t$, with $|\Shat_t|=k$
    \State Compute $\pred_t=\{\utilhat_t\left(\S, x_t\right): \S\in \cI_t(\Shat_t)\}$
    %for $a\in\S_t, b\in\S_t^c$, and $\pred_{0} = \utilhat_t(\S_t,x_t)$.
    \State Sample $\S_t \sim p_t = \igw_{\gamma}(\pred_t)$
    %\State Let $\Shat_t = \S_t \setminus\{a\} \cup\{b\}$ if $\alpha=(a,b)$ or $\Shat_t=\S_t$ if $\alpha=0$.
    \State Obtain reward $r_t$ for the selected set $\S_t$
    \State Feed $(x_t,\S_t)$ as context and $r_t$ as target for Online Regression. Obtain updated model $\utilhat_{t+1}$
    \EndFor
\end{algorithmic}
\end{algorithm}

The algorithm \algoname~ is based on \textsf{SquareCB}. On round $t$, it first finds a local optimum $\Shat_t$ of $\utilhat_t(\cdot,x_t)$ that respects the matroid constraint $\Shat_t\in\cI_t$. The set of actions is then defined as the set $\cI_t(\Shat_t)$. While this is an algorithm-dependent set of allowed actions,  the $\igw$ distribution satisfies \eqref{eq:igw_per_round} for any set of actions that may depend on the context $x_t$ and the estimated function $\utilhat_t$.

\begin{proposition}
    \label{prop:main}
    Under the oracle Assumption~\ref{assm:oracle}, with high probability, the algorithm \algoname~  with $\gamma \propto \sqrt{nk(A-k)/\Regsq}$ has regret 
    \begin{align}
     \sum_{t=1}^n \frac{1}{2} \max_{\S_t^*\in \cI_t}\util^*(\S_t^*, x_t)  - \util^*(\S_t, x_t)    
    %&= (k+1)\left[ \frac{1}{2} \gamma \mathsf{Reg_{sq}} + \frac{2n(k\cdot(A-k)+1)}{\gamma} + 16 \log(2/\delta) \right]\\
    %&+ \sqrt{2n \log (2/\delta)} \\
    &= \mathcal{O}\left( k\sqrt{k (A-k)\cdot n \Regsq}\right)
\end{align}
    for any sequence of contexts $x_1,\ldots,x_n$ and matroids $\cM_1,\ldots,\cM_n$, even if chosen adaptively.
\end{proposition}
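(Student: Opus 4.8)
The plan is to run the standard \textsf{SquareCB} reduction with the swap-neighborhood $\cI_t(\Shat_t)$ as the (context- and estimate-dependent) action set, and then convert the resulting cumulative regret into an ``on-average'' local-optimality statement to which Lemma~\ref{lem:local_opt} applies. First I would instantiate Lemma~\ref{lem:igw} on each round $t$, conditionally on the history $\cF_{t-1}$ (so that $x_t,\cM_t,\Shat_t,\utilhat_t$ are fixed), taking $K=|\cI_t(\Shat_t)|\leq k(A-k)+1$, scores $\pred_t(\S)=\utilhat_t(\S,x_t)$, and comparator values $f^*(\S)=\util^*(\S,x_t)$ for $\S\in\cI_t(\Shat_t)$. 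Writing $\En_t[\cdot]=\En[\cdot\mid\cF_{t-1}]$, this yields
\[
\En_t\Big[\max_{\T\in\cI_t(\Shat_t)}\util^*(\T,x_t)-\util^*(\S_t,x_t)\Big]\leq \frac{2K}{\gamma}+\frac{\gamma}{4}\,\En_t\big(\utilhat_t(\S_t,x_t)-\util^*(\S_t,x_t)\big)^2 .
\]
Summing over $t$, invoking Assumption~\ref{assm:oracle} (in the form \eqref{eq:assm:oracle11}) to control $\sum_t\En_t(\utilhat_t(\S_t,x_t)-\util^*(\S_t,x_t))^2\leq \Regsq$, and choosing $\gamma\propto\sqrt{nk(A-k)/\Regsq}$ gives $\sum_t L_t =O(\sqrt{k(A-k)\,n\Regsq})$, where $L_t:=\En_t[\max_{\T\in\cI_t(\Shat_t)}\util^*(\T,x_t)-\util^*(\S_t,x_t)]\geq 0$.

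Next I would feed the \emph{estimate-optimal} sets $(\Shat_1,\dots,\Shat_n)$ into Lemma~\ref{lem:local_opt}, read in its equivalent per-sequence form (apply it with $\varepsilon=\tfrac1n\sum_t g_t$, under which \eqref{eq:def:joint_opt} holds with equality): for any $\Shat_t\in\cI_t$ with $|\Shat_t|=k$,
\[
\sum_{t=1}^n\Big(\tfrac12\max_{\S^*\in\cI_t}\util^*(\S^*,x_t)-\util^*(\Shat_t,x_t)\Big)\leq \frac{k}{2}\sum_{t=1}^n\underbrace{\Big(\max_{\T\in\cI_t(\Shat_t)}\util^*(\T,x_t)-\util^*(\Shat_t,x_t)\Big)}_{=:g_t}.
\]
Since $\Shat_t\in\cI_t(\Shat_t)$, one has $g_t=L_t-D_t$ with $D_t:=\util^*(\Shat_t,x_t)-\En_t\util^*(\S_t,x_t)$. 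Writing the expected regret as $\sum_t(\tfrac12\max_{\S^*}\util^*(\S^*,x_t)-\util^*(\Shat_t,x_t))+\sum_tD_t$ and substituting the display above yields
\[
\sum_{t=1}^n\En_t\Big[\tfrac12\max_{\S^*\in\cI_t}\util^*(\S^*,x_t)-\util^*(\S_t,x_t)\Big]\leq \frac{k}{2}\sum_{t=1}^nL_t-\Big(\frac{k}{2}-1\Big)\sum_{t=1}^nD_t ,
\]
whose first term is $O(k\sqrt{k(A-k)\,n\Regsq})$, matching the claimed rate.

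The crux is therefore to show the correction $-(\tfrac k2-1)\sum_tD_t$ is lower order, i.e.\ that $\sum_t(-D_t)=\sum_t\En_t[\util^*(\S_t,x_t)-\util^*(\Shat_t,x_t)]=O(\sqrt{n\Regsq})$; this is where the two special features of the algorithm enter. Decomposing $\util^*(\S_t,x_t)-\util^*(\Shat_t,x_t)$ through $\utilhat_t$, the middle term $\utilhat_t(\S_t,x_t)-\utilhat_t(\Shat_t,x_t)$ is $\leq 0$ because $\Shat_t$ is a local \emph{maximum} of $\utilhat_t(\cdot,x_t)$ on $\cI_t(\Shat_t)$, leaving only the estimation errors $\En_t|\utilhat_t(\S_t,x_t)-\util^*(\S_t,x_t)|$ and $|\utilhat_t(\Shat_t,x_t)-\util^*(\Shat_t,x_t)|$. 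The first sums to $O(\sqrt{n\Regsq})$ by Cauchy--Schwarz and the oracle bound; the second is controlled because the modified $\igw$ rule \eqref{eq:igw} places mass $p_t(\Shat_t)\geq 1/2$ on $\Shat_t$ (the purpose of the $2K$ normalization), so $(\utilhat_t(\Shat_t,x_t)-\util^*(\Shat_t,x_t))^2\leq 2\,\En_t(\utilhat_t(\S_t,x_t)-\util^*(\S_t,x_t))^2$ and again Cauchy--Schwarz with the oracle bound gives $O(\sqrt{n\Regsq})$. Hence $-(\tfrac k2-1)\sum_t D_t=O(k\sqrt{n\Regsq})$, absorbed into $O(k\sqrt{k(A-k)\,n\Regsq})$.

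Finally, the above bounds the \emph{conditionally expected} regret, whereas the proposition asserts a high-probability bound on the realized quantity. I would close the gap with a Freedman/Azuma martingale argument: the realized regret differs from $\sum_t\En_t[\cdot]$ by a martingale with increments in $[-1,1]$, and the pathwise oracle guarantee $\sum_t(\utilhat_t(\S_t,x_t)-\util^*(\S_t,x_t))^2\leq\Regsq$ of \eqref{eq:assm:oracle11} is related to its conditional-expectation version by the same device; both contributions are lower order. I expect the martingale bookkeeping and the verification that $g_t=L_t-D_t$ plugs into the per-sequence form of Lemma~\ref{lem:local_opt} to be routine, whereas the genuinely delicate point—and the reason the algorithm is designed as it is—is the control of $\sum_tD_t$ via local optimality of $\Shat_t$ for $\utilhat_t$ together with the $p_t(\Shat_t)\ge 1/2$ guarantee.
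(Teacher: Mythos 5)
Your proposal is correct, and its skeleton is the same as the paper's: per-round application of Lemma~\ref{lem:igw} with action set $\cI_t(\Shat_t)$, the oracle bound plus martingale concentration to control $\sum_t\En_t(\utilhat_t(\S_t,x_t)-\util^*(\S_t,x_t))^2$, Lemma~\ref{lem:local_opt} applied to $(\Shat_1,\dots,\Shat_n)$, and Azuma to pass from conditional expectations to realized regret. Where you genuinely depart from the paper is in handling the discrepancy between the sampled set $\S_t$ and the local optimum $\Shat_t$. The paper disposes of it in one line (its inequality \eqref{eq:e2}): since every gap $\max_{\T\in\cI_t(\Shat_t)}\util^*(\T,x_t)-\util^*(\S,x_t)$ over $\S\in\cI_t(\Shat_t)$ is nonnegative and $p_t(\Shat_t)\ge 1/2$, the expected local regret satisfies $L_t\ge\tfrac12 g_t$; this single inequality both certifies the joint local $\varepsilon$-optimality needed for Lemma~\ref{lem:local_opt} and lets the final regret chain be routed entirely through $\max_{\cI_t(\Shat_t)}\util^*$, so your quantity $D_t$ never appears. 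You instead keep the exact identity $g_t=L_t-D_t$ and pay for it with a separate $O(\sqrt{n\Regsq})$ bound on $\sum_t(-D_t)$, which requires the three-way decomposition through $\utilhat_t$, the local maximality of $\Shat_t$ for $\utilhat_t$, and a second, different deployment of the $p_t(\Shat_t)\ge\tfrac12$ property (transferring the estimation error at $\Shat_t$ to the on-policy average). Your accounting is tighter by constants (roughly $\tfrac k2$ versus the paper's $k+1$ on the leading term) at the cost of extra work; the paper's is shorter and uses the $2K$-modification of $\igw$ exactly once. Two small repairs to yours: (i) the step ``$-(\tfrac k2-1)\sum_tD_t=O(k\sqrt{n\Regsq})$'' follows from your bound on $\sum_t(-D_t)$ only when $k\ge2$; for $k=1$ the coefficient flips sign, but then $D_t\le L_t$ (since $g_t\ge0$) absorbs the term into $\tfrac12\sum_t L_t$. (ii) As you yourself flag, \eqref{eq:assm:oracle11} bounds realized squared errors, not $\sum_t\En_t(\cdot)^2$; the bridge is precisely the paper's \eqref{eq:e3} (the Freedman-type lemma from \cite{foster2020beyond}), so your deferral of that step is legitimate.
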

\begin{proof}[Proof of Proposition~\ref{prop:main}]
We let $\cH_{t}$ be the $\sigma$-algebra corresponding to  $\S_1,r_1,\ldots, r_t,\S_t.$ Note that $\Shat_t$ is a function of $\cH_{t-1}, x_t, \cM_t,$ and for simplicity of exposition we assume that $x_t,\cM_t$ are $\cH_{t-1}$-measurable. Applying Lemma~\ref{lem:igw} to the $t$th step of \algoname, conditionally on $\cH_{t-1}$,
\begin{align}
    \sum_{\S\in\cI_t(\Shat_t)} p_t(\S) \left( \max_{\S_t^*\in \cI_t(\Shat_t)} \util^*(\S_t^*,x_t)-\util^*(\S, x_t)  - \frac{\gamma}{4} (\utilhat_t(\S,x_t)-\util^*(\S, x_t))^2 \right)\leq \frac{2|\cI_t(\Shat_t)|}{\gamma}
\end{align}
which we write as
\begin{align}
    \label{eq:e1}
    \En_{\S_t\sim p_t}\left[ 
        \max_{\S_t^*\in \cI_t(\Shat_t)} \util^*(\S_t^*,x_t)-\util^*(\S_t, x_t) \right]   
        \leq  
        \frac{\gamma}{4} \En_{\S_t\sim p_t} (\utilhat_t(\S_t,x_t)-\util^*(\S_t, x_t))^2 + \frac{2|\cI_t(\Shat_t)|}{\gamma}.
\end{align}
The left-hand side of this inequality is lower-bounded by
\begin{align}
    \label{eq:e2}
    \frac{1}{2}\left( \max_{\S_t^*\in \cI_t(\Shat_t)} \util^*(\S_t^*, x_t)-\util^*(\Shat_t,x_t) \right)  
\end{align}
since $\igw$ puts probability at least $1/2$ on the greedy action $\Shat_t$ and the difference is nonnegative (by definition, $\Shat_t\in \cI_t(\Shat_t)$). On the other hand, using martingale concentration  inequalities (see \cite[Lemma 2]{foster2020beyond}), with probability at least $1-\delta$,
\begin{align}
    \sum_{t=1}^n \En_{\S_t\sim p_t} \left(\utilhat_t(\S_t, x_t)-\util^*(\S_t, x_t) \right)^2 \leq 2 \left[ \sum_{t=1}^n (\utilhat_t(\S_t, x_t)-r_t)^2 - (\util^*(\S_t, x_t)-r_t)^2\right] + 16\log(2/\delta) \label{eq:e3}
\end{align}
and
\begin{align}
    \sum_{t=1}^n \max_{\S_t^*\in \cI_t(\Shat_t)} \util^*(\S_t^*,x_t)-\util^*(\S_t, x_t) 
    &\leq \sum_{t=1}^n \En_{\S_t\sim p_t}\left[ 
        \max_{\S_t^*\in \cI_t(\Shat_t)} \util^*(\S_t^*,x_t)-\util^*(\S_t, x_t) \right]  + \sqrt{2n \log (2/\delta)}.
        \label{eq:e4}
\end{align}
Let us denote the above event by $\mathcal{E}$. Under this event, in view of \eqref{eq:e1}, \eqref{eq:e2},  \eqref{eq:e3}, and by the oracle assumption,
\begin{align}
\label{eq:e5}
    \sum_{t=1}^n \frac{1}{2}\left( \max_{\S_t^*\in \cI_t(\Shat_t)} \util^*(\S_t^*,x_t)-\util^*(\Shat_t, x_t) \right)  \leq \frac{\gamma}{2} \Regsq + \sum_{t=1}^n  \frac{2|\cI_t(\Shat_t)|}{\gamma} +  16\log(2/\delta).
\end{align}
We now overbound $|\cI_t(\Shat_t)|\leq k\cdot(A-k)+1$ and conclude that under the event $\mathcal{E}$,
\begin{align}
    \sum_{t=1}^n \max_{\S_t^*\in \cI_t(\Shat_t)}\util^*(\S_t^*, x_t)-\util^*(\Shat_t, x_t)  \leq n\varepsilon 
\end{align}
for 
$$\varepsilon = \frac{\gamma}{n} \Regsq + \frac{4(k\cdot(A-k)+1)}{\gamma} +  \frac{32\log(2/\delta)}{n}.$$
Hence, the tuple $(\Shat_1,\ldots,\Shat_n)$ is a joint local $\varepsilon$-optimum, in the sense of \eqref{eq:def:joint_opt}, with respect to the sum of $\util^*(\cdot, x_t)$ for $t=1,\ldots,n$. By Lemma~\ref{lem:local_opt}, under the event $\mathcal{E}$,
\begin{align}
    \frac{1}{2}\left(\sum_{t=1}^n \max_{\S_t^*\in \cI_t}\util^*(\S_t^*, x_t)   - nk\varepsilon\right) \leq \sum_{t=1}^n \max_{\S_t^*\in \cI_t(\Shat_t)}\util^*(\S_t^*, x_t) . 
\end{align}
Combining with \eqref{eq:e4}, under the event $\mathcal{E}$,
\begin{align*}
     \sum_{t=1}^n \frac{1}{2} \max_{\S_t^*\in \cI_t}\util^*(\S_t^*, x_t)  - \util^*(\S_t, x_t)    
    &\leq \sum_{t=1}^n \En_{\S_t\sim p_t}\left[ 
        \max_{\S_t^*\in \cI_t(\Shat_t)} \util^*(\S_t^*,x_t)-\util^*(\S_t, x_t) \right]  + \sqrt{2n \log (2/\delta)} + \frac{1}{2}nk\varepsilon .
\end{align*}
Hence, once again using \eqref{eq:e1}, \eqref{eq:e3}, and the oracle assumption, with probability at least $1-\delta$,
\begin{align}
     \sum_{t=1}^n \frac{1}{2} \max_{\S_t^*\in \cI_t}\util^*(\S_t^*, x_t)  - \util^*(\S_t, x_t)    
    &\leq \frac{1}{2}n(k+1)\varepsilon  + \sqrt{2n \log (2/\delta)} \\
    &= (k+1)\left[ \frac{1}{2} \gamma \Regsq + \frac{2n(k\cdot(A-k)+1)}{\gamma} + 16 \log(2/\delta) \right]\\
    &+ \sqrt{2n \log (2/\delta)} \\
    &= \mathcal{O}\left( k\sqrt{n k (A-k) \Regsq}\right)
\end{align}
with the choice of $\gamma \propto \sqrt{nk(A-k)/\Regsq}$.

\end{proof}

\begin{remark}
We now compare the above result to the linear case (equivalently, modular functions with $\util(\emptyset)=0$).
Let $\util_1,\ldots,\util_n$ be a sequence of linear functions of the form $\util_j(\S) = \sum_{a\in \S} f_j(a)$. The joint local $\varepsilon$-optimality in \eqref{eq:def:joint_opt} means
\begin{align}
    \sum_{j=1}^n \max_{a\in\S_j, b\in\S_j^c} [f_j(b)-f_j(a)]_+ = \sum_{j=1}^n \max_{\T_j \in \cI_j(\S_j)} \util_j (\T_j) - \sum_{j=1}^n \util_j(\S_j) \leq n\varepsilon,
\end{align}
where the operator $[x]_+ = \max\{x,0\}$ is introduced to include the case where the maximum is achieved at $\T_j=\S_j$. This, in turn, implies
\begin{align}
    \sum_{j=1}^n \max_{\T_j} \util_j(\T_j) - \util_j(\S_j) = \sum_{j=1}^n \max_{\T_j} \sum_{a\in \T_j} f_j(a) - \sum_{j=1}^n\sum_{b\in \S_j} f_j(b) \leq nk\varepsilon.
\end{align}
It follows that Lemma~\ref{lem:local_opt} holds with $1/2$ replaced with $1$ in \eqref{eq:optimality_of_loc} when the functions are linear. This improvement implies a standard definition of regret with $c=1$ in \eqref{eq:regret_def} and a slightly improved constant in the quantitative version of the regret bound in Proposition~\ref{prop:main} (see the proof).
\end{remark}

\subsection{Improving the multiplicative constant}

For the improved multiplicative constant  of $1-e^{-1}$, we require a stronger online regression oracle that can learn the values for subsets of size of $k$ or smaller. Using this information, good local optima can be distinguished from the worse local optima.

\begin{assumption}
\label{assm:oracle2}
There is an online regression algorithm that satisfies Assumption~\ref{assm:oracle} with $|\S_t|=k$ replaced by $|\S_t|\leq k$.    
\end{assumption}

We also assume that each matroid $\cM_t$ has, for simplicity of exposition, the same rank $k$.

\begin{algorithm}
\caption{$\epsilon$-Greedy}\label{alg2}
\textbf{input:}  Parameter $\rho\in (0,1/2)$.
% \textbf{algorithm:}
\begin{algorithmic}[0]
    \For{$t=1,\ldots,n$}
    \State Receive $x_t$ and matroid $\cM_t=([A], \cI_t)$
    \State Find a local optimum $\Shat_t\in\cB(\cM_t)$ of $\cT\utilhat_t(\cdot, x_t)$
    \State Sample $\S_t'\sim \text{unif}(\cB(\Shat_t))$
    \State Sample $\S_t \sim (1-\rho) \delta_{\Shat_t} + \rho \cD_{\S_t'} $
    \State Obtain reward $r_t$ for the selected set $\S_t$
    \State Feed $(x_t,\S_t)$ as context and $r_t$ as target for Online Regression. Obtain updated model $\utilhat_{t+1}$
    \EndFor
\end{algorithmic}
\end{algorithm}

Rather than directly finding a local optimum of $\utilhat_t$, Algorithm~\ref{alg2} finds a local optimum of $\cT\utilhat_t$ on round $t$. Next, rather than sampling according to the Inverse Gap Weighting strategy, Algorithm~\ref{alg2} samples uniformly in the local neighborhood of $\Shat_t$. The set $\S_t$ is then equal to $\Shat_t$ with probability $1-\rho$, and with the remaining probability $\rho$ a subset of $\S_t'$ is chosen according to the distribution $\cD_{\S_t'}$ defined in \eqref{def:dist_subsets}. The latter step ensures that information about the surrogate potential $\cT\utilhat_t$ is collected.

\begin{proposition}
    \label{prop:main_T}
    Under Assumption~\ref{assm:oracle2}, Algorithm~\ref{alg2} has, with high probability, regret 
    \begin{align}
     \sum_{t=1}^n (1-e^{-1}) \max_{\S_t^*\in \cI_t}\util^*(\S_t^*, x_t)  - \util^*(\S_t, x_t)    
    %&= (k+1)\left[ \frac{1}{2} \gamma \mathsf{Reg_{sq}} + \frac{2n(k\cdot(A-k)+1)}{\gamma} + 16 \log(2/\delta) \right]\\
    %&+ \sqrt{2n \log (2/\delta)} \\
    &= \mathcal{O}\left( ((A-k) k^2 \log k )^{2/3} \cdot n^{2/3} (\Regsq)^{1/3} \right)
\end{align}
    for any sequence of contexts $x_1,\ldots,x_n$ and matroids $\cM_1,\ldots,\cM_n$, even if chosen adaptively.
\end{proposition}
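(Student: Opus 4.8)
The plan is to reprise the argument of Proposition~\ref{prop:main}, but with the surrogate operator $\cT$ replacing the identity and Lemma~\ref{lem:local_opt_T} replacing Lemma~\ref{lem:local_opt}. First I split the regret into an approximation term and an exploration term,
\[
\sum_{t=1}^n\Big[(1-e^{-1})\max_{\S_t^*\in\cI_t}\util^*(\S_t^*,x_t)-\util^*(\Shat_t,x_t)\Big]+\sum_{t=1}^n\big[\util^*(\Shat_t,x_t)-\util^*(\S_t,x_t)\big].
\]
Since each $\cM_t$ has rank $k$ and $\util^*(\cdot,x_t)$ is monotone, the best independent set is attained at a basis, so $\max_{\S_t^*\in\cI_t}\util^*(\S_t^*,x_t)=\max_{\T\in\cB_t}\util^*(\T,x_t)$, and the first sum is exactly the quantity bounded by Lemma~\ref{lem:local_opt_T} once I certify that $(\Shat_1,\dots,\Shat_n)$ is a joint local $\varepsilon$-optimum for the functions $\cT\util^*(\cdot,x_t)$ in the sense of \eqref{eq:def:joint_opt_T}. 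For the second sum, conditionally on $\cH_{t-1}$ the set $\S_t$ equals $\Shat_t$ with probability $1-\rho$ and is otherwise a subset of a neighbour in $\cB_t(\Shat_t)$, so its conditional mean gap from $\util^*(\Shat_t,x_t)$ is at most $\rho$; a martingale bound as in \eqref{eq:e4} then gives $\sum_t[\util^*(\Shat_t,x_t)-\util^*(\S_t,x_t)]\le \rho n+O(\sqrt{n\log(1/\delta)})$ with high probability.

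The core of the argument is certifying the joint local $\varepsilon$-optimum for $\cT\util^*$. Because $\Shat_t$ is an \emph{exact} local maximum of $\cT\utilhat_t(\cdot,x_t)$ over $\cB_t(\Shat_t)$, the per-round slack of $\cT\util^*$ at $\Shat_t$ is governed by how well the surrogate of the estimate tracks the surrogate of the truth on $\cB_t(\Shat_t)$: using \eqref{eq:def_cTvtil} and Jensen's inequality,
\[
\big|\cT\util^*(\S,x_t)-\cT\utilhat_t(\S,x_t)\big|=\tau(\S)\,\Big|\En_{\T\sim\cD_\S}\big[\util^*(\T,x_t)-\utilhat_t(\T,x_t)\big]\Big|\le \tau(\S)\sqrt{\En_{\T\sim\cD_\S}\big(\utilhat_t(\T,x_t)-\util^*(\T,x_t)\big)^2}.
\]
Bounding the worst neighbour gap $\max_{\T\in\cB_t(\Shat_t)}\cT\util^*(\T,x_t)-\cT\util^*(\Shat_t,x_t)$ by the aggregate of these estimation errors over the swap neighbourhood, the per-round slack is at most a constant times $\tau$ times the total root-mean-square error of $\utilhat_t$ against $\util^*$ under the family of distributions $\{\cD_{\S'}\}_{\S'\in\cB_t(\Shat_t)}$.

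It remains to bound this error by the oracle regret, and here lies the main obstacle. The exploration step is engineered so that, with probability $\rho$, the queried set is drawn from $\cD_{\S'}$ with $\S'$ uniform over $\cB_t(\Shat_t)$; feeding the stronger oracle of Assumption~\ref{assm:oracle2} and applying the concentration step \eqref{eq:e3} yields
\[
\sum_{t=1}^n\frac{1}{|\cB_t(\Shat_t)|}\sum_{\S'\in\cB_t(\Shat_t)}\En_{\T\sim\cD_{\S'}}\big(\utilhat_t(\T,x_t)-\util^*(\T,x_t)\big)^2\;\lesssim\;\frac{\Regsq+\log(1/\delta)}{\rho}.
\]
The difficulty---the step that separates this from the clean SquareCB analysis---is a change of measure: local optimality must be certified \emph{simultaneously over all swaps} $\T\in\cB_t(\Shat_t)$, whereas exploration only controls the error \emph{averaged uniformly} over that neighbourhood. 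Passing from the neighbourhood-averaged error to a guarantee over every individual swap costs a factor $|\cB_t(\Shat_t)|\le k(A-k)+1$ (the number of matroid swaps, exactly as in Proposition~\ref{prop:main}); the normalization $\tau$ of the surrogate weights $w_{s,t}$ contributes a further logarithmic factor $\log k$. A Cauchy--Schwarz across rounds then turns the per-round square-root terms into $\sqrt{n}$ times the root of the summed errors, producing $n\varepsilon\lesssim (A-k)k\log k\cdot\sqrt{n\,\Regsq/\rho}$ up to lower-order terms. Lemma~\ref{lem:local_opt_T} bounds the approximation term by $(1-e^{-1})nk\varepsilon$, supplying the extra factor $k$.

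Collecting the two parts gives, with high probability, a regret of order $(A-k)k^2\log k\cdot\sqrt{n\,\Regsq/\rho}+\rho n$. Tuning the exploration rate to $\rho\asymp\big((A-k)k^2\log k\cdot\sqrt{\Regsq/n}\big)^{2/3}$ balances the two contributions and yields the stated rate $\mathcal{O}\big(((A-k)k^2\log k)^{2/3}\,n^{2/3}\,\Regsq^{1/3}\big)$. Beyond the bookkeeping already present in Proposition~\ref{prop:main}, the only genuinely new work is the change-of-measure bound of the previous paragraph, together with the elementary estimates on the weights $w_{s,t}$ and their normalization $\tau$ needed to make the logarithmic factor explicit.
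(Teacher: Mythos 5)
Your proposal is correct and follows essentially the same route as the paper's proof: the same decomposition into an exploration cost $\rho n$ plus an approximation term certified as a joint local $\varepsilon$-optimum of $\cT\util^*$ via the exact local optimality of $\cT\utilhat_t$, the same $1/\rho$ change of measure from the uniformly-explored neighbourhood to the oracle's squared-error regret, the same factors $B \leq k(A-k)+1$ and $\tau(k)=\mathcal{O}(\log k)$, the same extra factor $k$ from Lemma~\ref{lem:local_opt_T}, and the same tuning of $\rho$. The differences (explicitly reducing $\max_{\S_t^*\in\cI_t}$ to a maximum over bases via monotonicity, and applying Jensen to root-mean-square rather than absolute estimation errors before the Cauchy--Schwarz step) are cosmetic.
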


\section{Examples}
\label{sec:examples}

In this section, our goal is to provide a few simple examples of online regression oracles that could be used in practice together with the algorithms from previous sections. We start by noting that for any finite collection $\cF$, one can simply use Vovk's aggregating forecaster (see e.g. \cite[Section 3.5]{cesa2006prediction}) and obtain $\Regsq = \mathcal{O}(\log |\cF|)$ (see \cite{foster2020beyond} for further discussion). Looking back at the example in the introduction, such an aggregating forecaster would be applicable if in \eqref{eq:example_ranking} we choose $\lambda_i(x)$ and $f_i(\T,x)$ from a finite collection. The techniques developed in the online learning literature, however, allow one to go beyond finite classes. 

\subsection{Measuring diversity of a set}

We start by proposing a nonnegative monotone submodular function for a set of elements represented by $d$-dimensional vectors. With a slight abuse of notation, we identify elements $[A]$ with their vector representations. Given a set $\S\subset \reals^d$ of vectors, the measure $F(\S)$ should indicate whether the vectors span a sufficiently diverse set of directions. A possible approach that received attention in the machine learning community is the submodular function
$$F(\S) = \log \text{det} (L_\S)$$
where $L$ is a positive definite matrix with rows indexed by elements of the ground set and $L_\S$ denoting the $|\S|\times |\S|$ submatrix corresponding to $\S$. As a concrete example, one may take $L$ to be the Gram matrix $L_{i,j} = \inner{s_i,s_j}$ for $s_i,s_j\in\S$. In this case, the determinant can be viewed as measuring the volume of the set. Unfortunately, this function is not monotone in general. To address this shortcoming, we show that the notions of Gaussian width or Rademacher averages of the set are submodular and monotone. Furthermore, thanks to standard concentration-of-measure arguments, they are easy to estimate to accuracy $\epsilon$ in time $\mathcal{O}(d|\S|\epsilon^{-2})$.
\begin{lemma}
	\label{lem:submodular_width}
 	Let $\S\subset \reals^d$. Let $\eta$ be a random variable in $\reals^d$ with $\En\eta= 0$. Then 
	\begin{align}
		\label{eq:def_gaussian_width}
		\cW(\S) = \En \max_{s\in \S} \inner{s, \eta}
	\end{align}
	is monotone submodular and nonnegative. Furthermore, if, for instance, $\eta_1,\ldots,\eta_k \sim \mathcal{N}(0,I_d)$ independent, 
	$$\left|\frac{1}{k}\sum_{j=1}^k \max_{s\in \S} \inner{s, \eta_j} - \cW(\S)\right| \leq \text{diam}(\S)\sqrt{\frac{2\log (2/\delta)}{k}}.$$
\end{lemma}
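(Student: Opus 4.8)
The plan is to handle the three structural claims (nonnegativity, monotonicity, submodularity) by pushing the argument inside the expectation, since each reduces to a pointwise statement about the map $\eta\mapsto\max_{s\in\S}\inner{s,\eta}$, and then to treat the concentration bound separately via Gaussian concentration of Lipschitz functions applied to the empirical average.

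Monotonicity and nonnegativity I would dispatch first and pointwise. If $\S\subseteq\S'$ then $\max_{s\in\S}\inner{s,\eta}\le\max_{s\in\S'}\inner{s,\eta}$ for every $\eta$, and taking expectations gives $\cW(\S)\le\cW(\S')$. For nonnegativity, pick any $s_0\in\S$; then $\max_{s\in\S}\inner{s,\eta}\ge\inner{s_0,\eta}$ pointwise, so $\cW(\S)\ge\inner{s_0,\En\eta}=0$ because $\En\eta=0$ (and $\cW(\emptyset)=0$ by convention).

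The crux is submodularity, and the key observation is that the integrand is already submodular for each fixed $\eta$. Fixing $\eta$ and writing $v_s=\inner{s,\eta}$, consider $h(\S)=\max_{s\in\S}v_s$. For $a\notin\T$ the marginal gain is $h(\S\cup\{a\})-h(\S)=[\,v_a-\max_{s\in\S}v_s\,]_+$, which is a nonincreasing function of the scalar $\max_{s\in\S}v_s$. Hence for $\S\subseteq\T$ we have $\max_{s\in\S}v_s\le\max_{s\in\T}v_s$ and therefore $[\,v_a-\max_{s\in\S}v_s\,]_+\ge[\,v_a-\max_{s\in\T}v_s\,]_+$, i.e. $h$ has diminishing marginal returns and is submodular. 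Since this inequality holds pointwise in $\eta$, I would take expectations of both sides: submodularity is a family of linear inequalities in the function values and is preserved by the nonnegative averaging operation $\En[\cdot]$, yielding $\cW(\S\cup\{a\})-\cW(\S)\ge\cW(\T\cup\{a\})-\cW(\T)$.

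For the concentration bound, let $\phi(\eta)=\max_{s\in\S}\inner{s,\eta}$, so the estimator is $\frac1k\sum_{j=1}^k\phi(\eta_j)$ with mean $\cW(\S)$. The function $\phi$ is Lipschitz: for the maximizer $s^\star$ at $\eta$, $\phi(\eta)-\phi(\eta')\le\inner{s^\star,\eta-\eta'}$. Viewing $G(\eta_1,\dots,\eta_k)=\frac1k\sum_j\phi(\eta_j)$ as a function on $\reals^{dk}$ and applying Cauchy--Schwarz across the $k$ blocks shows $G$ is $(L/\sqrt k)$-Lipschitz, where $L$ is the Lipschitz constant of $\phi$. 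Since $(\eta_1,\dots,\eta_k)\sim\mathcal N(0,I_{dk})$, the Gaussian (Borell--TIS) concentration inequality gives $\Pr(|G-\En G|\ge t)\le 2\exp(-t^2k/(2L^2))$, and setting the right-hand side to $\delta$ produces the stated bound with $L$ in place of $\mathrm{diam}(\S)$. The main obstacle I anticipate is sharpening this Lipschitz constant from the naive radius $\max_{s\in\S}\|s\|$ down to $\mathrm{diam}(\S)$: the fix exploits $\En\eta=0$, since for any fixed $s_0\in\S$ one has $\En\max_{s\in\S}\inner{s-s_0,\eta}=\cW(\S)$ (the subtracted linear term being mean-zero), and the centered support function $\eta\mapsto\max_{s\in\S}\inner{s-s_0,\eta}$ has Lipschitz constant $\max_{s\in\S}\|s-s_0\|\le\mathrm{diam}(\S)$. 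Running the argument on this centered version therefore delivers $L=\mathrm{diam}(\S)$, and the only delicate point is applying the centering consistently to both the estimator and its mean.
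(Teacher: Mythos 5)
Your treatment of nonnegativity, monotonicity, and submodularity is correct and follows essentially the same route as the paper: both arguments reduce everything to pointwise statements about $\eta\mapsto\max_{s\in\S}\inner{s,\eta}$ and then take expectations. The only cosmetic difference is that you verify diminishing marginal returns while the paper verifies $\max_{s\in\S\cup\T}\inner{s,\eta}+\max_{s\in\S\cap\T}\inner{s,\eta}\leq\max_{s\in\S}\inner{s,\eta}+\max_{s\in\T}\inner{s,\eta}$ directly. Both versions share the caveat that the sets involved must be nonempty, since the integrand can be negative: with the convention $\cW(\emptyset)=0$, submodularity actually fails for disjoint singletons ($\cW(\{a\})=\cW(\{b\})=0$ while $\cW(\{a,b\})>0$), an edge case the paper glosses over just as you do.

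On the concentration bound, the obstacle you flag is not merely delicate --- it is a genuine error in the paper's own proof, and you have diagnosed it correctly. The paper claims $(\eta_1,\ldots,\eta_k)\mapsto\frac{1}{k}\sum_j\max_{s\in\S}\inner{s,\eta_j}$ is $\text{diam}(\S)/\sqrt{k}$-Lipschitz via the inequality $\max_{s\in\S}\inner{s,\eta_j-\eta'_j}\leq\text{diam}(\S)\norm{\eta_j-\eta'_j}$, but that inequality is false in general; the correct constant is the radius $\max_{s\in\S}\norm{s}$, exactly as in your naive bound. Indeed, the lemma's stated bound fails outright for the uncentered estimator: take $\S=\{s_0\}$ a singleton with $\norm{s_0}$ large, so that $\cW(\S)=0$ and $\text{diam}(\S)=0$, while $\frac{1}{k}\sum_j\inner{s_0,\eta_j}\sim\mathcal{N}(0,\norm{s_0}^2/k)$. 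Your centering device is the right repair: $\frac{1}{k}\sum_j\max_{s\in\S}\inner{s-s_0,\eta_j}$ has mean exactly $\cW(\S)$ (using $\En\eta=0$) and is $\text{diam}(\S)/\sqrt{k}$-Lipschitz, so Borell--TIS gives the stated bound \emph{for that modified estimator}. But the transfer back to the uncentered estimator, which you correctly single out as the remaining difficulty, cannot be made: the two estimators differ by $\frac{1}{k}\sum_j\inner{s_0,\eta_j}$, whose fluctuations are of order $\norm{s_0}/\sqrt{k}$ and are not controlled by $\text{diam}(\S)$. So the lemma must be amended --- either state it for the centered estimator, or replace $\text{diam}(\S)$ by $\max_{s\in\S}\norm{s}$ --- and with either amendment your argument is complete, and is more careful than the one in the paper.
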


We can extend definition \eqref{eq:def_gaussian_width} in a number of ways. In particular, we can define
\begin{align}
	\cW (\S, \Sigma) = \En \max_{s\in \S} \inner{s, \Sigma \eta}
\end{align}
for a positive semidefinite matrix $\Sigma$. When thought of as part of a context, $\Sigma$ can emphasize important features by rescaling.

\subsection{A simple model}

In this section we specify the online regression oracle for the following model. Let $\vtil\in \cF_{\textsf{nms}}$, and fix a nondecreasing $1$-Lipschitz function $\sigma:\reals\to [0,1]$. Let
\begin{align}
	\label{def_class_simple_submodular}
	\cF = \left\{ (\S,x) \mapsto \vtil(\S, x) \sigma(\inner{\theta,x}) : \norm{\theta}_2\leq 1 \right\}.
\end{align}
Functions in this class are parametrized by a vector $\theta\in\reals^d = \cX$. While this class is relatively simple, the square loss 
$$\theta \mapsto (\vtil(\S, x) \sigma(\inner{\theta,x}) - r)^2$$
is not necessarily convex in the parameter $\theta$. However, a computationally-efficient GLMtron-style algorithm guarantees small regret in the well-specified setting. The proof of the following result is essentially from \cite{foster2020beyond}, while the technique for learning GLMs with a `modified gradient' is from \cite{kalai2009isotron,kakade2011efficient}.
\begin{algorithm}
\caption{GLMtron-style algorithm}\label{alg:isotron-simple}
% \textbf{input:}  \\
% \textbf{algorithm:}
\begin{algorithmic}[0]
	\State Input: learning rate $\eta$. Initialize: $\theta_1=0$.
    \For{$t=1,\ldots,n$}
    \State Observe $\S_t, x_t$. 
    \State Predict $\widehat{r}_t = \vtil(\S_t,x_t)\sigma(\inner{\theta_t,x_t})$.
	\State Observe outcome $r_t$ and update the model 
		$$\theta_{t+1} = \text{Proj}_{\mathsf{B}_2(1)} \left(\theta_t - \eta g_t\right)~~~~\text{where}~~~~ %$\text{Proj}$ is projection onto unit Euclidean ball and
	g_t = \vtil(\S_t,x_t)(\vtil(\S_t,x_t)\sigma(\inner{\theta_t,x_t})-r_t)x_t.$$
    \EndFor
\end{algorithmic}
\end{algorithm}
\begin{proposition}
	\label{prop:isotron_single}
	Assume that $r_t\in [0,1]$, $\vtil(\S_t,x_t)\leq 1$, and $\norm{x_t}\leq 1$ almost surely. With the setting of $\eta=n^{-1/2}$, the regret of Algorithm~\ref{alg:isotron-simple} is bounded, with probability at least $1-\delta$, as
	\begin{align}
		\sum_{t=1}^n (\widehat{r}_t-\util^*(\S_t,x_t))^2 \leq \mathcal{O}(\sqrt{n\log(1/\delta)})
	\end{align}
	for any sequence $(x_1, \S_1),\ldots, (x_n, \S_n)$ and $\En[r_t|\S_t,x_t] = \util^*(\S_t,x_t)$ for $\util^*\in\cF$.
\end{proposition}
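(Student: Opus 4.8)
The plan is to analyze the GLMtron update as an online gradient-descent step on a surrogate that exploits the well-specified assumption $\En[r_t\mid \S_t,x_t]=\util^*(\S_t,x_t)=\vtil(\S_t,x_t)\sigma(\inner{\theta^*,x_t})$ for some $\norm{\theta^*}\le 1$. The quantity we want to control is $\sum_t(\widehat r_t-\util^*(\S_t,x_t))^2 = \sum_t \vtil(\S_t,x_t)^2\bigl(\sigma(\inner{\theta_t,x_t})-\sigma(\inner{\theta^*,x_t})\bigr)^2$. The standard isotron/GLMtron trick is to track the potential $\Phi_t = \tfrac12\norm{\theta_t-\theta^*}_2^2$ and show that each step's decrease in $\Phi_t$ dominates (up to lower-order terms) the per-round squared error. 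The key algebraic identity is to expand $\Phi_{t+1}-\Phi_t$ using the (nonexpansive) projection onto $\mathsf{B}_2(1)$, which contains $\theta^*$, so that projection only helps:
\begin{align}
    \Phi_{t+1}-\Phi_t \;\le\; -\eta\inner{g_t,\theta_t-\theta^*} + \tfrac{\eta^2}{2}\norm{g_t}_2^2.
\end{align}

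First I would compute the inner product term. Writing $m_t=\vtil(\S_t,x_t)$ and using $r_t=\util^*(\S_t,x_t)+\xi_t$ with $\En[\xi_t\mid \cH_{t-1}]=0$, the modified gradient is $g_t = m_t(m_t\sigma(\inner{\theta_t,x_t})-r_t)x_t$. The crucial point, and the reason for the `modified gradient,' is that
\begin{align}
    \inner{g_t,\theta_t-\theta^*} \;=\; m_t\bigl(m_t\sigma(\inner{\theta_t,x_t})-\util^*(\S_t,x_t)\bigr)\inner{x_t,\theta_t-\theta^*} - m_t\xi_t\inner{x_t,\theta_t-\theta^*}.
\end{align}
Because $\sigma$ is nondecreasing and $1$-Lipschitz, the factor $\bigl(\sigma(\inner{\theta_t,x_t})-\sigma(\inner{\theta^*,x_t})\bigr)\inner{x_t,\theta_t-\theta^*}\ge 0$ and in fact lower-bounds $\bigl(\sigma(\inner{\theta_t,x_t})-\sigma(\inner{\theta^*,x_t})\bigr)^2$ by monotonicity together with Lipschitzness (this is the monotone-GLM inequality: for nondecreasing $1$-Lipschitz $\sigma$, $(\sigma(u)-\sigma(v))(u-v)\ge(\sigma(u)-\sigma(v))^2$). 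This is exactly what turns the first-order term into a multiple of the per-round error $m_t^2(\sigma(\inner{\theta_t,x_t})-\sigma(\inner{\theta^*,x_t}))^2$. Summing over $t$, telescoping $\Phi_t$, and using $\Phi_1-\Phi_{n+1}\le\tfrac12\norm{\theta^*}_2^2\le \tfrac12$ yields
\begin{align}
    \sum_{t=1}^n (\widehat r_t-\util^*(\S_t,x_t))^2 \;\le\; \frac{1}{2\eta} + \frac{\eta}{2}\sum_{t=1}^n\norm{g_t}_2^2 + \sum_{t=1}^n m_t\xi_t\inner{x_t,\theta_t-\theta^*}.
\end{align}

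Next I would bound the two remaining terms. For the curvature term, since $m_t\le 1$, $\norm{x_t}\le 1$, $\sigma\in[0,1]$, and $r_t\in[0,1]$, we get $\norm{g_t}_2^2\le 1$, so $\tfrac{\eta}{2}\sum_t\norm{g_t}_2^2\le \tfrac{\eta n}{2}$. Combined with $\tfrac{1}{2\eta}$ and the choice $\eta=n^{-1/2}$, these deterministic terms contribute $\mathcal{O}(\sqrt n)$. The \textbf{main obstacle} is the noise term $\sum_t m_t\xi_t\inner{x_t,\theta_t-\theta^*}$: this is a martingale difference sum (each summand has conditional mean zero since $\theta_t,\S_t,x_t$ are $\cH_{t-1}$-measurable and $\En[\xi_t\mid\cH_{t-1}]=0$), but its increments are bounded only by $|m_t\xi_t\inner{x_t,\theta_t-\theta^*}|\le 2$, which gives $\mathcal{O}(\sqrt{n\log(1/\delta)})$ by Azuma--Hoeffding — matching the claimed rate. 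One must be slightly careful that a naive bound leaves an unabsorbed $\sum_t(\sigma(\inner{\theta_t,x_t})-\sigma(\inner{\theta^*,x_t}))$-type term; the clean route is to use the self-bounding variance of the martingale (Freedman/Bernstein) so that the noise contribution is of order $\sqrt{(\sum_t \text{per-round error})\log(1/\delta)}$, which can then be absorbed into the left-hand side via the elementary inequality $ab\le \tfrac12 a^2+\tfrac12 b^2$. Carrying this through and solving the resulting quadratic in $\sqrt{\sum_t(\widehat r_t-\util^*)^2}$ produces the final bound $\mathcal{O}(\sqrt{n\log(1/\delta)})$, which by \eqref{eq:assm:oracle11} verifies Assumption~\ref{assm:oracle} for this class.
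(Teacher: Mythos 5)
Your proof is correct and follows essentially the same route as the paper's: the monotone-GLM inequality $(\sigma(u)-\sigma(v))^2 \le (\sigma(u)-\sigma(v))(u-v)$ to bound the per-round error by $\En_{t-1}\inner{g_t,\theta_t-\theta^*}$, the telescoping potential argument on $\norm{\theta_t-\theta^*}^2$ for the realized inner products, and Azuma--Hoeffding for the martingale gap between conditional expectations and realizations (your ``noise term'' is exactly the paper's Azuma martingale). Your closing detour through Freedman/Bernstein is unnecessary --- and the suggested self-bounding step is suspect, since Lipschitzness bounds the per-round error \emph{by} the conditional variance rather than the reverse --- but, as you yourself note, bounded increments plus Azuma already yield the matching $\mathcal{O}(\sqrt{n\log(1/\delta)})$, which is precisely how the paper concludes.
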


While for brevity we used in \eqref{def_class_simple_submodular} the same context $x$ in the submodular part $\vtil(\S,x)$ and in $\sigma(\inner{\theta,x})$, it should be clear that the results subsume, for instance, the model $\cW(\S,\Sigma)\sigma(\inner{\theta,x})$. In the next section, we propose a more complex extension.

\subsection{Sum of GLMs}

We would like to extend the model \eqref{def_class_simple_submodular} to a sum of several submodular functions weighted by monotonically increasing functions, as to cover the example in the Introduction. To be precise, let $\cX \subseteq \reals^d$ and fix $k$ matrices $P_1,\ldots,P_k \in \reals^{d\times d}$. Fix functions $\vtil_1,\ldots,\vtil_k\in \cF_{\textsf{nms}}$ and define
\begin{align}
	\label{def_class_sum_submodular}
	\cF = \left\{ (\S,x) \mapsto \sum_{i=1}^k \vtil_i(\S,x) \sigma(\theta^\tr P_i x) : \norm{\theta} \leq 1 \right\}.
\end{align}
For instance, for product recommendation examples, we may posit a model of the form
	\begin{align}
		\util(\S,x) = \sum_{\C\in\cC} \cW(\S\cap \C) \sigma(\inner{\theta_{\C}, x})
	\end{align}
	where $\cC$ is a collection of categories (specified by sets $\C$). In this case, $\cW(\S\cap \C)$ measures the quality of diverse coverage of each category $\C$, weighted by the user-specific function $\sigma(\inner{\theta_{\C}, x})$ that measures general interest in the category. The model is a natural `contextualized' generalization of the model in \cite{yue2011linear, yue2008predicting}, and can be viewed as an instance of \eqref{def_class_sum_submodular} with block-diagonal indicators $P_i$.
	
While it is not difficult to certify existence of an online regression algorithm for the class in \eqref{def_class_sum_submodular} (using the techniques in \cite{rakhlin2015online,rakhlin2014online}), a general form of a computationally efficient algorithm in the spirit of Algorithm~\ref{alg:isotron-simple} is unclear. Instead, we proceed as in \cite{goel2018learning} and make a further simplifying assumption that $\sigma(a)=\max\{a,0\}$ is a ReLU function. In addition, we assume that $x_1,\ldots,x_n$ are i.i.d. draws from a symmetric distribution with $\Sigma = \En x_t x_t^\tr$, and assume a uniform lower bound on the eigenvalues of the following matrices:
	\begin{align}
		\label{assmpt:least_eval}
		\forall t=1,\ldots,n,~~~~ \En_{x} \left[ \sum_{i,j=1}^k  \vtil_i(\S_t,x)\vtil_j(\S_t,x) P_i x x^\tr  P_j^\tr \right] \succeq \lambda_{\textsf{min}} I, ~~~~\lambda_{\textsf{min}} >0
	\end{align}
	almost surely. The condition ensures that there is some nonzero strength of the signal being conveyed to the algorithm. 

\begin{algorithm}
\caption{GLMtron-style algorithm}\label{alg:isotron-simple2}
\begin{algorithmic}[0]
	\State Input: learning rate $\eta$. Initialize: $\theta_1=0$.
    \For{$t=1,\ldots,n$}
    \State Observe $\S_t, x_t$. 
    \State Predict $\widehat{r}_t = \sum_{i=1}^k \vtil_i(\S_t,x_t)\sigma(\theta_t^\tr P_i x_t)$.
	\State Observe outcome $r_t$ and update the model 
		$$\theta_{t+1} = \text{Proj}_{\mathsf{B}_2(1)} \left(\theta_t - \eta g_t\right)~~~~\text{where}~~~~ %$\text{Proj}$ is projection onto unit Euclidean ball and
	g_t = \left( \sum_{i=1}^k \vtil_i(\S_t,x_t)\sigma(\theta_t^\tr P_i x_t) - r_t \right) \left( \sum_{j=1}^k \vtil_j(\S_t,x_t) P_j \right) x_t .$$
    \EndFor
\end{algorithmic}
\end{algorithm}

The proof of the following Proposition is similar to the one in \cite{goel2018learning}. 
\begin{proposition}
	\label{prop:multi_glm}
	Let $\sigma(a)=\max\{a,0\}$. Suppose $\vtil_i$ is symmetric: $\vtil_i(\S_t,x)=\vtil_i(\S_t,-x)$, for all $i=1,\ldots,k$. Suppose condition \eqref{assmpt:least_eval} holds for some $\lambda_{\textsf{min}}>0$, and let $\lambda_{\textsf{max}}$ denote the largest eigenvalue of $\sum_{i=1}^k P_i \Sigma P_i^\tr$. Assume that $r_t\in [0,1]$, $\vtil_i(\S_t,x_t)\leq 1$, and $\norm{x_t}\leq 1$ almost surely. With the setting of $\eta \propto n^{-1/2}$, the regret of Algorithm~\ref{alg:isotron-simple} is bounded, with probability at least $1-\delta$, as
	\begin{align}
		\sum_{t=1}^n (\widehat{r}_t-\util^*(\S_t,x_t))^2 \leq \mathcal{O}(\sqrt{n\log(1/\delta)})
	\end{align}
	for any sequence $\S_1,\ldots,\S_n$, an i.i.d. sequence of $x_1,\ldots,x_n$ with symmetric distribution, and the well-specified model $\En[r_t|\S_t,x_t] = \util^*(\S_t,x_t)$ for $\util^*\in\cF$ defined in \eqref{def_class_sum_submodular}. The $\mathcal{O}$ notation hides a factor $\frac{k \lambda_{\textsf{max}}}{\lambda_{\textsf{min}}} $.
\end{proposition}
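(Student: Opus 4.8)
The plan is to mirror the single-GLM analysis of Proposition~\ref{prop:isotron_single} (which follows \cite{foster2020beyond}) while adapting the GLMtron argument of \cite{kalai2009isotron,kakade2011efficient} to the ReLU-weighted sum in \eqref{def_class_sum_submodular}, as in \cite{goel2018learning}. Write $\theta^*$ for the true parameter, so $\util^*(\S_t,x_t)=\sum_{i=1}^k \vtil_i(\S_t,x_t)\sigma((\theta^*)^\tr P_i x_t)$, and abbreviate $\widehat{r}_t=\sum_i \vtil_i(\S_t,x_t)\sigma(\theta_t^\tr P_i x_t)$. First I would track the potential $\Phi_t=\norm{\theta_t-\theta^*}_2^2$ and expand $\Phi_{t+1}\le \norm{\theta_t-\eta g_t-\theta^*}_2^2$ using nonexpansiveness of the projection onto $\mathsf{B}_2(1)$ (which contains $\theta^*$). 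This yields
\begin{align}
\Phi_{t+1}\le \Phi_t - 2\eta\,\langle g_t,\theta_t-\theta^*\rangle + \eta^2\norm{g_t}_2^2.
\end{align}

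The heart of the argument is the cross term. Decompose $g_t=\bar g_t + \xi_t$, where $\bar g_t=\En[g_t\mid \S_t,x_t,\theta_t]$ is the conditional mean (using $\En[r_t\mid \S_t,x_t]=\util^*(\S_t,x_t)$) and $\xi_t=(\util^*(\S_t,x_t)-r_t)(\sum_j \vtil_j(\S_t,x_t)P_j)x_t$ is a martingale-difference noise term. The crucial GLMtron inequality exploits monotonicity and $1$-Lipschitzness of $\sigma$: writing $v_i=\vtil_i(\S_t,x_t)$, $a_i=\theta_t^\tr P_i x_t$, $b_i=(\theta^*)^\tr P_i x_t$, one has
\begin{align}
\langle \bar g_t,\theta_t-\theta^*\rangle = \Big(\sum_i v_i(\sigma(a_i)-\sigma(b_i))\Big)\Big(\sum_i v_i(a_i-b_i)\Big).
\end{align}
Because $\sigma$ is nondecreasing and $1$-Lipschitz, each summand satisfies $(\sigma(a_i)-\sigma(b_i))(a_i-b_i)\ge (\sigma(a_i)-\sigma(b_i))^2$, but the products are \emph{coupled} across $i$, so I cannot bound the two sums termwise in general. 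This coupling is the main obstacle: I would resolve it using the symmetry assumption $\vtil_i(\S_t,x)=\vtil_i(\S_t,-x)$ together with the symmetric distribution of $x_t$, which is precisely why \cite{goel2018learning} introduces these hypotheses. Under symmetry, one averages the ReLU identity $\sigma(z)=\tfrac12(z+|z|)$ over $\pm x$; the odd linear part in the prediction gap cancels in expectation against the symmetric weights, leaving $\En_x[\langle\bar g_t,\theta_t-\theta^*\rangle]$ comparable to $\tfrac12\En_x(\widehat{r}_t-\util^*(\S_t,x))^2$ up to the quadratic form in \eqref{assmpt:least_eval}. Concretely, the symmetrization turns the cross term into $(\theta_t-\theta^*)^\tr \big(\En_x[\sum_{i,j}v_i v_j P_i xx^\tr P_j^\tr]\big)(\theta_t-\theta^*)$, which by \eqref{assmpt:least_eval} is at least $\lambda_{\textsf{min}}\Phi_t$ and, on the prediction side, lower-bounds $\En_x(\widehat r_t-\util^*)^2$. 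This is where the factor $\lambda_{\textsf{max}}/\lambda_{\textsf{min}}$ enters.

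Having lower-bounded the drift, I would bound $\norm{g_t}_2^2=O(\lambda_{\textsf{max}})$ using $v_i\le1$, $\norm{x_t}\le1$, and $|\widehat r_t-r_t|\le k$, giving $\En\norm{g_t}_2^2\le k^2\lambda_{\textsf{max}}$ up to constants. Summing the telescoped potential inequality over $t=1,\ldots,n$, taking conditional expectations, and controlling the martingale sum $\sum_t\langle \xi_t,\theta_t-\theta^*\rangle$ via a Freedman/Azuma argument (the increments are bounded since $\theta_t,\theta^*\in\mathsf{B}_2(1)$), I obtain with probability $1-\delta$
\begin{align}
\sum_{t=1}^n \En_x\big(\widehat r_t-\util^*(\S_t,x)\big)^2 \;\le\; \frac{\Phi_1}{\eta}\cdot\frac{\lambda_{\textsf{max}}}{\lambda_{\textsf{min}}} + \eta\, n\,k^2\lambda_{\textsf{max}} + O\!\big(\sqrt{n\log(1/\delta)}\big).
\end{align}
Choosing $\eta\propto n^{-1/2}$ balances the first two terms, and a final step replacing the per-round expectation $\En_x(\widehat r_t-\util^*)^2$ by the realized $(\widehat r_t-\util^*(\S_t,x_t))^2$ via one more martingale concentration (valid because $x_t$ is drawn fresh each round) yields the stated $\mathcal{O}(\sqrt{n\log(1/\delta)})$ regret with the hidden factor $k\lambda_{\textsf{max}}/\lambda_{\textsf{min}}$. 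The only genuinely delicate point is the symmetrization step that decouples the ReLU sum; the remainder is the standard GLMtron bookkeeping.
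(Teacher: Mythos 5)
Your proposal is correct and follows essentially the same route as the paper's proof: the same GLMtron/online-gradient potential argument with projection, the same decomposition of $g_t$ into its conditional mean plus martingale noise, the same crucial ReLU symmetrization identity $\En_x[\sigma(v^\tr x)\, w^\tr x]=\tfrac{1}{2}\En_x[(v^\tr x)(w^\tr x)]$ (valid because the weights $\vtil_i$ are even in $x$ and $x$ is symmetric), and the same pairing of \eqref{assmpt:least_eval} with the $\lambda_{\textsf{max}}$ bound to relate prediction error and drift, producing the hidden factor $\frac{k\lambda_{\textsf{max}}}{\lambda_{\textsf{min}}}$. One cosmetic slip: the quadratic form does not ``lower-bound'' the prediction error; rather, as in the paper, $\En_x(\widehat{r}_t-\util^*(\S_t,x_t))^2 \leq k\lambda_{\textsf{max}}\norm{\theta_t-\theta^*}^2$ while the drift is at least $\tfrac{\lambda_{\textsf{min}}}{2}\norm{\theta_t-\theta^*}^2$, and the two comparisons through $\norm{\theta_t-\theta^*}^2$ give the ratio---which is clearly what you intend.
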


\appendix

\section{Proofs}

\begin{proof}[Proof of Lemma~\ref{lem:local_opt}]
The left-hand side of \eqref{eq:optimality_of_loc} is immediate since $\S_j\in\cI_j(\S_j)$. We now prove the right-hand side. Let $\S_j=\{s_1^j,\ldots,s_k^j\} \in \cI_j$, $\T_j=\{t_1^j,\ldots,t_k^j\} \in \cI_j$. By \cite[Corollary 39.12a]{schrijver2003combinatorial}, there is a bijection $\pi_j$ between $\S_j\setminus \T_j$ and $\T_j\setminus\S_j$ such that $\S_j\setminus\{t_i^j\}\cup \{\pi_j(t_i^j)\} \in \cI_j$ for any $i\in[k]$. We extend the bijection to be identity on $\T_j\cap\S_j$. Then
\begin{align}
    \sum_{j=1}^n\util_j(\S_j\cup\T_j) - \util_j(\S_j) &= \sum_{j=1}^n\sum_{i=1}^k \util_j(\S_j\cup\{t_1^j,\ldots,t_i^j\}) - \util_j(\S_j\cup\{t_1^j,\ldots,t_{i-1}^j\}) \label{eq1}\\
    & \leq \sum_{j=1}^n\sum_{i=1}^k \util_j(\S_j\cup\{t_i^j\}) - \util_j(\S_j) \label{eq2}\\
    &\leq \sum_{j=1}^n \sum_{i=1}^k \util_j(\S_j\setminus\{\pi_j(t_i^j)\}\cup\{t_i^j\}) - \util_j(\S_j\setminus\{\pi_j(t_i^j)\}) \label{eq3}\\
    &= \sum_{i=1}^k \sum_{j=1}^n  \util_j(\S_j\setminus\{\pi_j(t_i^j)\}\cup\{t_i^j\}) - \util_j(\S_j\setminus\{\pi_j(t_i^j)\}) \label{eq3a}\\
    &\leq \sum_{i=1}^k \left[\sum_{j=1}^n\util_j(\S_j) - \util_j(\S_j\setminus\{\pi_j(t_i^j)\}) + n\varepsilon \right]\label{eq4}\\
    %&\leq |\S^*| \cdot (\widehat{\Delta}+\varepsilon)
    &= \sum_{j=1}^n\sum_{i=1}^k \util_j(\S_j) - \util_j(\S_j\setminus\{s_i^j\}) + nk\varepsilon \label{eq5}\\
    &\leq \sum_{j=1}^n \sum_{i=1}^k \util_j(\{s_1^j,\ldots,s_i^j\}) - \util_j(\{s_1^j,\ldots,s_{i-1}^j\}) + nk\varepsilon \label{eq6}\\
    &= \sum_{j=1}^n \util_j(S_j) + nk\varepsilon \label{eq7}
\end{align}
where \eqref{eq2} and \eqref{eq3} are due to submodularity, \eqref{eq4} is by  \eqref{eq:def:joint_opt}, \eqref{eq6} is by submodularity, and \eqref{eq7} is by telescoping. 
Hence, by monotonicity
    $$\sum_{j=1}^n \util_j(\T_j) \leq \sum_{j=1}^n \util_j(\T_j\cup \S_j) \leq 2\sum_{j=1}^n\util_j(\S_j)+ nk\varepsilon.$$
    
\end{proof}

\begin{lemma}[\cite{filmus2014monotone}]
    \label{lem:gaps_subm_lin}
    Let $\S=\{s_1,\ldots,s_k\}$ and $\T=\{t_1,\ldots,t_k\}$ be any two bases of a matroid $\cM$ and suppose that the elements of $\S$ are indexed according to bijection $\pi$ so that $\S-s_i+t_i \in \cB(\cM)$ for all $1\leq i\leq k$. Then
    \begin{align}
        \util(\S) \geq (1-e^{-1})\util(\T) + (1-e^{-1})\sum_{i=1}^k [ \cT\util(\S) - \cT\util(\S-s_i+t_i)]
    \end{align}
\end{lemma}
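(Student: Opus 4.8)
The plan is to follow \cite{filmus2014monotone}, reducing everything to an exact single-swap identity for the surrogate $\cT\util$ and then matching the resulting weighted sums against two submodularity inequalities. First I would make two harmless reductions. Pairs with $s_i=t_i$ contribute zero to each difference $\cT\util(\S)-\cT\util(\S-s_i+t_i)$ (then $\S-s_i+t_i=\S$), so they may be dropped; and since adding a constant to $\util$ raises the left side by $c$ but the right side only by $(1-e^{-1})c$ while the $\cT$-differences are invariant to constant shifts (the normalization $\tau$ depends only on the common cardinality $k$), it suffices to treat the hardest case $\util(\emptyset)=0$. The computational backbone is then the exact single-swap identity: expanding $\cT\util(\S)=\sum_{\T\subseteq\S}w_{k,|\T|}\util(\T)$, every subset avoiding $s_i$ cancels against the subset of $\S-s_i+t_i$ avoiding $t_i$, leaving
\begin{align}
\label{eq:swapid}
\cT\util(\S)-\cT\util(\S-s_i+t_i)=\sum_{R\subseteq \S\setminus s_i} w_{k,|R|+1}\,\big[\util(R+s_i)-\util(R+t_i)\big].
\end{align}
Summing \eqref{eq:swapid} over $i$ expresses $\sum_i[\cT\util(\S)-\cT\util(\S-s_i+t_i)]$ purely in terms of marginals of swapping $s_i$ for $t_i$ against all backgrounds $R\subseteq\S\setminus s_i$.

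Next I would pass to the integral representation $w_{k,t}=\int_0^1\frac{e^p}{e-1}p^{k-1}(1-p)^{k-t}\,dp$, so that the sum over $i$ of \eqref{eq:swapid} becomes $\int_0^1\frac{e^p}{e-1}\,\Psi(p)\,dp$ for a kernel $\Psi(p)$ assembled from the marginals. I would then split $\util(R+s_i)-\util(R+t_i)=[\util(R+s_i)-\util(R)]-[\util(R+t_i)-\util(R)]$ and control the two families of marginals by submodularity: for the ``in'' part, regrouping by $Q=R+s_i\subseteq\S$ and using the standard bound $\sum_{s\in Q}[\util(Q)-\util(Q-s)]\le\util(Q)$; for the ``out'' part, using the matroid-exchange inequality $\sum_i[\util(\S-s_i+t_i)-\util(\S-s_i)]\ge\util(\T)-\util(\S)$ (the same exchange bijection $\pi$ that drives Lemma~\ref{lem:local_opt}) together with submodularity $\util(t_i\mid R)\ge\util(t_i\mid \S\setminus s_i)$. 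The role of the weight $\propto e^p$ is precisely to make $\int_0^1\frac{e^p}{e-1}(\cdots)\,dp$ collapse to the constant $1-e^{-1}$; this is the discrete shadow of the continuous-greedy ODE $y'(p)=\mathrm{OPT}-y(p)$, whose solution carries the factor $1-e^{-1}$.

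The main obstacle is this final matching step: the background $R$ is coupled to the swap index $i$ (it ranges over subsets of $\S\setminus s_i$), so the ``in'' and ``out'' marginals sit on different levels of the subset lattice and cannot be bounded independently without loss. Carrying out the ``in'' part exactly produces uncancelled sums $\sum_{|Q|=q}\util(Q)$ over intermediate subsets, and closing the argument requires the exact recurrence/integral identity satisfied by $w_{k,t}$ — equivalently, a first-order relation forcing the weight kernel to be $e^p/(e-1)$ — to make these intermediate terms telescope and to yield the constant $1-e^{-1}$ rather than a weaker one. I would therefore invoke the weight identity of \cite{filmus2014monotone} at exactly this juncture; verifying it, and checking that the exchange bijection is compatible with the level-by-level integration, is the technical heart of the lemma.
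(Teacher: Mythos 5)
First, a point of reference: the paper never proves this lemma --- it is stated with the citation \cite{filmus2014monotone} and consumed as a black box in the proof of Lemma~\ref{lem:local_opt_T} --- so your attempt can only be judged against the original argument of Filmus and Ward. Measured that way, your opening steps are correct and faithful to the source. The constant-shift reduction is valid: since $|\S|=|\S-s_i+t_i|=k$ and the weights depend only on cardinalities, the $\cT$-differences are shift-invariant while the left side gains $c$ and the right side only $(1-e^{-1})c$. Your single-swap cancellation identity is exactly right (subsets avoiding $s_i$ cancel at equal weight against subsets avoiding $t_i$), and it holds for arbitrary cardinality-based weights. The two submodularity ingredients you name are also individually correct; the exchange bound $\sum_{i=1}^k[\util(\S-s_i+t_i)-\util(\S-s_i)]\ge \util(\T)-\util(\S)$ is precisely the chain used in the paper's proof of Lemma~\ref{lem:local_opt}.

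The genuine gap is the one you flag yourself, and it cannot be waved through: the step you defer --- recombining the ``in'' and ``out'' marginals, with the background $R$ coupled to the swap index $i$, so that the $p$-integral collapses to exactly $1-e^{-1}$ --- is not an auxiliary fact about the weights; it \emph{is} the lemma. Invoking ``the weight identity of \cite{filmus2014monotone}'' at that juncture is citing the statement you set out to prove, so what you have produced is an accurate reading guide to Filmus--Ward's proof rather than a proof. Note in particular that the conclusion is asymmetric (the factor $1-e^{-1}$ multiplies $\util(\T)$ and the swap sum but not $\util(\S)$), and manufacturing exactly this asymmetry from inequalities that are individually lossy is what occupies the weight-recurrence analysis in \cite{filmus2014monotone}; nothing in your outline certifies that the matching closes without degrading the constant. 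One further caution if you do carry it out: you copied the paper's kernel $w_{s,t}=\int_0^1 \frac{e^p}{e-1}p^{s-1}(1-p)^{s-t}\,dp$, but the exponent $s-1$ appears to be a typo for $t-1$; that is the form used by Filmus--Ward, and the one under which $\tau(k)=\int_0^1 \frac{e^p}{e-1}\cdot\frac{1-(1-p)^k}{p}\,dp\le \frac{e}{e-1}H_k$, the bound the paper later quotes in the proof of Proposition~\ref{prop:main_T}. Your steps so far are insensitive to the exact weights, but the deferred telescoping is not, so this must be corrected before the crux of the argument can even be attempted.
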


\begin{proof}[Proof of Lemma~\ref{lem:local_opt_T}]
    Once again, the left-hand side of \eqref{eq:optimality_of_loc_T} is immediate since $\S_j\in\cB_j(\S_j)$. We now prove the right-hand side. Let $\S_j=\{s_1^j,\ldots,s_k^j\} \in \cB_j$, $\T_j=\{t_1^j,\ldots,t_k^j\} \in \cB_j$, and let $\pi_j$ be a bijection $\pi_j$ as in the proof of Lemma~\ref{lem:local_opt}. Then, using the result of Lemma~\ref{lem:gaps_subm_lin},
    \begin{align}
    &  (1-e^{-1})\util_j(\T_j) - \util_j(\S_j)  \\
    &= (1-e^{-1})\util_j(\T_j) - \util_j(\S_j) + (1-e^{-1})\sum_{i=1}^k [ \cT \util_j(\S_j) - \cT \util_j(\S_j - \pi(t_i)+t_i) ] \\
    &- (1-e^{-1})\sum_{i=1}^k [ \cT \util_j(\S_j) - \cT \util_j(\S_j-\pi(t_i)+t_i) ] \\
    &\leq (1-e^{-1}) k\left(\max_i \cT \util_j(\S_j-\pi(t_i)+t_i) - \cT \util_j(\S_j) \right)\\
    &\leq (1-e^{-1})k \left(\max_{\T_j\in\cB_j(\S_j)} \cT \util_j(\T_j) - \cT \util_j(\S_j) \right)
    \end{align}
    Rearranging and summing over $j=1,\ldots,n$ completes the proof.
\end{proof}

\begin{proof}[Proof of Proposition~\ref{prop:main_T}]
To simplify the proof, we assume a uniform upper bound $B$ on the size $\cB(\Shat_t)$ of any local neighborhood of the set $\Shat_t$. Furthermore, note that $\tau(\Shat_t)$ only depends on the cardinality of $\Shat_t$, and thus we abuse the notation and write $\tau(k)$ for $\tau(\Shat_t)$.

Now, conditionally on $\cH_{t-1}, x_t, \cM_t$, for any $\S^*_t \in \cB(\cM_t)$,
\begin{align}
    &\En_{\S_t',\S_t} \left[ (1-e^{-1})\util^*(\S^*_t,x_t) - \util^*(\S_t,x_t) \right] \\
    &\leq (1-e^{-1})\util^*(\S^*_t,x_t) - \util^*(\Shat_t,x_t) + \En_{\S_t',\S_t} \left[ \util^*(\Shat_t, x_t) - \util^*(\S_t,x_t) \right] \\
    &\leq (1-e^{-1})\util^*(\S^*_t, x_t) - \util^*(\Shat_t, x_t) + \rho,
\end{align}
where the last step follows from the assumption that $\util^*$ takes values in $[0,1]$. On the other hand, by Lemma~\ref{lem:local_opt_T},
\begin{align}
    (1-e^{-1})\sum_{t=1}^n \util^*(\S^*_t, x_t) &\leq \sum_{t=1}^n \util^*(\Shat_t, x_t)  + (1-e^{-1})nk \varepsilon
\end{align}
whenever
\begin{align}
\label{eq:subopt11}
    \sum_{t=1}^n \max_{\S^*\in\cB(\Shat_t)} \cT \util^*(\S^*, x_t) - \cT \util^*(\Shat_t, x_t)  \leq n\varepsilon.
\end{align}
We now quantify $\varepsilon$ in \eqref{eq:subopt11}. It holds that 
\begin{align}
    &\max_{\S^*\in\cB(\Shat_t)} \cT \util^*(\S^*, x_t) - \cT \util^*(\Shat_t, x_t) \\
    &\leq \max_{\S^*\in\cB(\Shat_t)} \cT \util^*(\S^*, x_t) - \cT\utilhat_t(\S^*, x_t) + \cT\utilhat_t(\Shat_t, x_t)- \cT \util^*(\Shat_t, x_t) \\
    %&\leq \max_{\S^*\in\cB(\Shat_t)} |\cT \util^*(x_t,\S^*) - \cT\utilhat_t(x_t,\S^*) | + |\cT\utilhat_t(x_t,\Shat_t)- \cT \util^*(x_t,\Shat_t)| \\
    &\leq 2 \max_{\S^*\in\cB(\Shat_t)} |\cT \util^*(\S^*, x_t) - \cT\utilhat_t(\S^*, x_t) | 
\end{align}
since $\Shat_t$, by definition, is a local maximum of $\utilhat_t(x_t,\cdot)$ in $\cB(\Shat_t)$. The last expression is at most
\begin{align}
    &2
        B \cdot \En_{\S'_t\sim \text{unif}(\cB(\Shat_t))} |\cT \util^*(\S'_t, x_t) - \cT\utilhat_t(\S'_t, x_t) | \\
    &\leq 2
        B \tau(k)\cdot \En_{\S'_t\sim \text{unif}(\cB(\Shat_t)), \S\sim \cD_{\S'_t}} |\util^*(\S, x_t) - \utilhat_t(\S, x_t) |,
\end{align}
using the definition in \eqref{eq:def_cTvtil}.

Summing over $t=1,\ldots,n$, assuming $\rho\in(0,1/2)$, and defining the shorthand $\phi= 2(1-e^{-1})k B \tau(k)$,
\begin{align}
    &\sum_{t=1}^n \En_{\S_t',\S_t} \left[ (1-e^{-1})\util^*(\S^*_t, x_t) - \util^*(\S_t, x_t) \right] \\
    &\leq \rho n 
    + \frac{\phi}{\sqrt{\rho}}\left( \sum_{t=1}^n \En_{\S'_t\sim \text{unif}(\cB(\Shat_t)), \S\sim \cD_{\S'_t}} \sqrt{\rho}| \util^*(\S, x_t) - \utilhat_t(\S, x_t) | + \sqrt{1-\rho}|\utilhat_t(\Shat_t, x_t)- \util^*(\Shat_t, x_t)| \right) .
\end{align}
Furthermore,
\begin{align}
   &\sum_{t=1}^n \En_{\S'_t\sim \text{unif}(\cB(\Shat_t)), \S\sim \cD_{\S'_t}} \sqrt{\rho}| \util^*(\S, x_t) - \utilhat_t(\S, x_t) | + \sqrt{1-\rho}|\utilhat_t(\Shat_t, x_t)- \util^*(\Shat_t, x_t)| \\
   &\leq \sqrt{2n \sum_{t=1}^n \En_{\S'_t\sim \text{unif}(\cB(\Shat_t)), \S\sim \cD_{\S'_t}} \rho\left( \util^*(\S, x_t) - \utilhat_t(\S, x_t) \right)^2 + (1-\rho) (\utilhat_t(\Shat_t, x_t)- \util^*(\Shat_t, x_t))^2} \\
   &\leq \sqrt{2n \sum_{t=1}^n \En_{\S'_t\sim \text{unif}(\cB(\Shat_t)), \S_t \sim (1-\rho) \delta_{\Shat_t} + \rho \cD_{\S_t'} } \left( \util^*(\S_t, x_t) - \utilhat_t(\S_t, x_t) \right)^2 } .
\end{align}
As in \eqref{eq:e3}, with probability at least $1-\delta$,
\begin{align}
    \sum_{t=1}^n \En_{\S_t', \S_t} \left(\utilhat_t(\S_t, x_t)-\util^*(\S_t, x_t) \right)^2 
    &\leq 2 \left[ \sum_{t=1}^n (\utilhat_t(\S_t, x_t)-r_t)^2 - (\util^*(\S_t, x_t)-r_t)^2\right] + 16\log(2/\delta) \label{eq:e32} \\
    &\leq 2\Regsq + 16\log(2/\delta).
\end{align}
Putting everything together,
\begin{align}
    \sum_{t=1}^n \En_{\S_t',\S_t} \left[ (1-e^{-1})\util^*(\S^*_t, x_t) - \util^*(\S_t, x_t) \right] &\leq \rho n +  \frac{\phi}{\sqrt{\rho}}\sqrt{2n (2\Regsq + 16\log(2/\delta))} 
\end{align}
Setting
$$\rho = n^{-1/3} (2(1-e^{-1})k B \tau(k))^{2/3} (4\Regsq + 32\log(2/\delta))^{1/3}$$
balances the two terms and yields an upper bound of 
$$2n^{2/3} (2(1-e^{-1})k B \tau(k))^{2/3} (4\Regsq + 32\log(2/\delta))^{1/3}.$$
Martingale concentration results yield the final high-probability bound. Finally, as shown in \cite{filmus2014monotone}, $\tau(k) \leq \frac{e}{e-1}H_k \leq C\log k$, where $H_k$ is the $k$th harmonic number. Further, we overbound $B\leq  k\cdot(A-k)+1$, as in the proof of Proposition~\ref{prop:main}. The resulting dependence on $k$ and $A$ is then
$((A-k) k^2 \log k )^{2/3}.$
\end{proof}

\begin{proof}[Proof of Lemma~\ref{lem:submodular_width}]
	For every $\eta\in\reals^d$,
	\begin{align}
		\max_{s\in \S \cup \T} \inner{s, \eta}= \max\left\{ \max_{s\in \S} \inner{s, \eta}, \max_{s\in \T} \inner{s, \eta} \right\}~~~~\text{and}~~~~
		\max_{s\in \S \cap \T} \inner{s, \eta} \leq \min\left\{ \max_{s\in \S} \inner{s, \eta}, \max_{s\in \T} \inner{s, \eta} \right\}.
	\end{align}
	Hence,
	\begin{align}
		\max_{s\in \S \cup \T} \inner{s, \eta} + \max_{s\in \S \cap \T} \inner{s, \eta} \leq \max_{s\in \S} \inner{s, \eta} + \max_{s\in \T} \inner{s, \eta}.
	\end{align}
	Taking expectation on both sides establishes the submodularity claim. Monotonicity is trivial while nonnegativity follows by Jensen's inequality and the assumption of $\En \eta =0$. 
	Finally, $\boldsymbol{\eta}=(\eta_1,\ldots,\eta_k) \mapsto \frac{1}{k}\sum_{j=1}^k \max_{s\in \S} \inner{s, \eta_j}$ is Lipschitz:
	$$\frac{1}{k}\sum_{j=1}^k\max_{s\in \S} \inner{s, \eta_j}- \frac{1}{k}\sum_{j=1}^k\max_{s\in \S} \inner{s, \eta'_j} \leq \frac{1}{k}\sum_{j=1}^k \max_{s\in \S} \inner{s, \eta_j-\eta'_j} \leq \text{diam}(\S) \frac{1}{k}\sum_{j=1}^k \norm{\eta_j-\eta'_j} \leq \frac{\text{diam}(\S)}{\sqrt{k}}\norm{\boldsymbol{\eta}-\boldsymbol{\eta'}}.$$
	Hence, Gaussian concentration yields 
	\begin{align}
		\mathbb{P}\left(\left|\frac{1}{k}\sum_{j=1}^k\max_{s\in\S} \inner{s,\eta_j} - \cW(\S)\right| \geq u\right) \leq 2\exp\left\{-\frac{k u^2}{2\text{diam}(\S)^2}\right\}.
	\end{align}	
\end{proof}

\begin{proof}[Proof of Proposition~\ref{prop:isotron_single}]
	Let $\util^*\in\cF$ be parametrized by $\theta^*\in \reals^d$ with $\norm{\theta^*}\leq 1$. First, note that
	\begin{align}
		\En_{t-1} g_t &= \vtil(\S_t,x_t)(\vtil(\S_t,x_t)\sigma(\inner{\theta_t,x_t})-\En [r_t|\S_t,x_t] )x_t \\
		&=\vtil(\S_t,x_t)^2\left( \sigma(\inner{\theta_t,x_t})-\sigma(\inner{\theta^*,x_t})\right) x_t. 
	\end{align}

	We have
	\begin{align}
		(\widehat{r}_t - \util^*(\S_t,x_t))^2 &= \left( \vtil(\S_t, x_t)\sigma(\inner{\theta_t,x_t}) - \vtil(\S_t, x_t)\sigma(\inner{\theta^*,x_t})\right)^2\\
		&= \vtil(\S_t, x_t)^2 \left( \sigma(\inner{\theta_t,x_t}) - \sigma(\inner{\theta^*,x_t})\right)^2\\
		%&\leq \left( \vtil(\S_t, x_t)\right) \cdot \vtil(\S_t, x_t)\left(\sigma_i(\inner{\theta^t_i,x_t}) - \sigma_i(\inner{\theta^*_i,x_t})\right)^2 \\
		&\leq \vtil(\S_t, x_t)^2 \cdot \left(\sigma(\inner{\theta_t,x_t}) - \sigma(\inner{\theta^*,x_t})\right)(\inner{\theta_t,x_t}-\inner{\theta^*,x_t}) \\
		&= \En_{t-1} \inner{g_t, \theta_t-\theta^*} 
	\end{align}
	By Azuma-Hoeffding inequality, with probability at least $1-\delta$,
	\begin{align}
		\sum_{t=1}^n \En_{t-1} \inner{g_t, \theta_t-\theta^*}  \leq \sum_{t=1}^n  \inner{g_t, \theta_t-\theta^*} + G\sqrt{c n \log (1/\delta)}
	\end{align}
	where $c$ is an absolute constant and $G$ is an almost sure bound on $\norm{g_t}$. Standard analysis of online linear optimization yields 
	$$\sum_{t=1}^n  \inner{g_t, \theta_t-\theta^*}\leq G\sqrt{n}$$
	by rearranging and summing the terms in
	$$\norm{\theta_{t+1}-\theta^*}^2 \leq \norm{\theta_{t}-\eta g_t - \theta^*}^2 = \norm{\theta_{t} - \theta^*}^2 - 2\eta\inner{g_t, \theta_t-\theta^*}+ \eta^2 G^2.$$
\end{proof}

\begin{proof}[Proof of Proposition~\ref{prop:multi_glm}]
For brevity, let us write $\vtil_{i,t} = \vtil_i(\S_t,x_t)$. Observe that 
	\begin{align}
			\En_{x_t}\left(\sum_{i=1}^k \vtil_{i,t}\sigma(\theta^\tr_t P_i x_t) - \vtil_{i,t}\sigma(\theta_*^\tr P_i x_t) \right)^2 
			&\leq k \En_{x_t} \sum_{i=1}^k \vtil_{i,t}^2 \left(\sigma(\theta^\tr_t P_i x_t) - \sigma(\theta_*^\tr P_i x_t) \right)^2 \\
			&\leq k \En_{x_t}\sum_{i=1}^k \vtil_{i,t}^2 \left((\theta_t-\theta_*)^\tr P_i x_t\right)^2 \\
			&= k \sum_{i=1}^k (\theta_t-\theta_*)^\tr P_i \Sigma P_i^\tr (\theta_t-\theta^*) \\
			&\leq k \lambda_{\textsf{max}} \norm{\theta_t-\theta_*}^2.
	\end{align}
	Next,
	\begin{align}
		\En_{r_t} g_t &= \En_{r_t} \left[ \left( \sum_{i=1}^k \vtil_{i,t}\sigma(\theta_t^\tr P_i x_t) - r_t \right) \left( \sum_{j=1}^k \vtil_{j,t} P_j \right) x_t \right] \\
		& = \left[ \left( \sum_{i=1}^k \vtil_{i,t}(\sigma(\theta_t^\tr P_i x_t) - \sigma(\theta_*^\tr P_i x_t)) \right) \left( \sum_{j=1}^k \vtil_{j,t} P_j \right) x_t \right]
	\end{align}
	and
	\begin{align}
		\label{eq:grad_eq1}
		\En_{x_t, r_t}\inner{g_t, \theta_t-\theta_*} &= \En_{x_t} \inner{  \sum_{i,j=1}^k \vtil_{i,t}(\sigma(\theta_t^\tr P_i x_t) - \sigma(\theta_*^\tr P_i x_t))   \vtil_{j,t} P_j x_t, \theta_t - \theta_*} \notag\\
		&= \En_{x_t}\sum_{i,j = 1}^k \vtil_{i,t}(\sigma(\theta_t^\tr P_i x_t) - \sigma(\theta_*^\tr P_i x_t)) (\vtil_{j,t}\theta_t^\tr P_j x_t - \vtil_{j,t}\theta_*^\tr P_j x_t). 
	\end{align}
	We now use the trivial property that for $\sigma(a)=\max\{a,0\}$ and $x$ with a symmetric distribution, for all $v, w \in \reals^d$, 
	$$\En_{x} \left[ \sigma(v^\tr x)~ w^\tr x \right] = \frac{1}{2}\En_{x} \left[ v^\tr x~ w^\tr x \right].$$
	With this, the expression in \eqref{eq:grad_eq1} is equal to 
	\begin{align}
		&\frac{1}{2} \En_{x_t} \sum_{i,j=1}^k \vtil_{i,t}\vtil_{j,t}(\theta_t^\tr P_i x_t - \theta_*^\tr P_i x_t) (\theta_t^\tr P_j x_t - \theta_*^\tr P_j x_t) \\
		&= \frac{1}{2}  (\theta_t - \theta_*)^\tr \left[ \En_{x_t} \sum_{i,j=1}^k  \vtil_{i,t}\vtil_{j,t} P_i x_t x_t^\tr  P_j^\tr \right] (\theta_t - \theta_*) \\
		&\geq \norm{\theta_t - \theta_*}^2 \cdot \lambda_{\textsf{min}}/2
	\end{align}
	Hence,
	\begin{align}
		\En_{x_t}(\widehat{r}_t - \util^*(\S_t,x_t))^2 &= \En_{x_t}\left(\sum_{i=1}^k \vtil_{i,t}\sigma(\theta^\tr_t P_i x_t) - \vtil_{i,t}\sigma(\theta_*^\tr P_i x_t)  \right)^2 \\
		&\leq \frac{2k \lambda_{\textsf{max}}\left(\sum_{i=1}^k P_i \Sigma P_i^\tr\right)}{\lambda_{\text{min}}} \En_{x_t,r_t} \inner{g_t, \theta_t-\theta_*}.
	\end{align}
	Proceeding as in the proof of Proposition~\ref{prop:isotron_single} completes the proof.
\end{proof}

\bibliographystyle{alpha}
\bibliography{ref}

\newcommand{\etalchar}[1]{$^{#1}$}
\begin{thebibliography}{GKW{\etalchar{+}}13}

\bibitem[AL99]{abe1999associative}
Naoki Abe and Philip~M Long.
\newblock Associative reinforcement learning using linear probabilistic
  concepts.
\newblock In {\em ICML}, pages 3--11. Citeseer, 1999.

\bibitem[ANSS20]{asadpour2020ranking}
Arash Asadpour, Rad Niazadeh, Amin Saberi, and Ali Shameli.
\newblock Ranking an assortment of products via sequential submodular
  optimization.
\newblock {\em arXiv preprint arXiv:2002.09458}, 2020.

\bibitem[CBL06]{cesa2006prediction}
Nicolo Cesa-Bianchi and G{\'a}bor Lugosi.
\newblock {\em Prediction, learning, and games}.
\newblock Cambridge university press, 2006.

\bibitem[CKK17]{chen2017interactive}
Lin Chen, Andreas Krause, and Amin Karbasi.
\newblock Interactive submodular bandit.
\newblock In {\em NIPS}, pages 141--152, 2017.

\bibitem[EKM21]{esfandiari2021adaptivity}
Hossein Esfandiari, Amin Karbasi, and Vahab Mirrokni.
\newblock Adaptivity in adaptive submodularity.
\newblock In {\em Conference on Learning Theory}, pages 1823--1846. PMLR, 2021.

\bibitem[FNW78]{fisher1978analysis}
Marshall~L Fisher, George~L Nemhauser, and Laurence~A Wolsey.
\newblock An analysis of approximations for maximizing submodular set
  functions—ii.
\newblock In {\em Polyhedral combinatorics}, pages 73--87. Springer, 1978.

\bibitem[FR20]{foster2020beyond}
Dylan Foster and Alexander Rakhlin.
\newblock Beyond ucb: Optimal and efficient contextual bandits with regression
  oracles.
\newblock In {\em International Conference on Machine Learning}, pages
  3199--3210. PMLR, 2020.

\bibitem[FW14]{filmus2014monotone}
Yuval Filmus and Justin Ward.
\newblock Monotone submodular maximization over a matroid via non-oblivious
  local search.
\newblock {\em SIAM Journal on Computing}, 43(2):514--542, 2014.

\bibitem[GKM18]{goel2018learning}
Surbhi Goel, Adam Klivans, and Raghu Meka.
\newblock Learning one convolutional layer with overlapping patches.
\newblock In {\em International Conference on Machine Learning}, pages
  1783--1791. PMLR, 2018.

\bibitem[GKW{\etalchar{+}}13]{gabillon2013adaptive}
Victor Gabillon, Branislav Kveton, Zheng Wen, Brian Eriksson, and
  Shanmugavelayutham Muthukrishnan.
\newblock Adaptive submodular maximization in bandit setting.
\newblock In {\em NIPS}, pages 2697--2705. Citeseer, 2013.

\bibitem[KG07]{krause2007near}
Andreas Krause and Carlos Guestrin.
\newblock Near-optimal observation selection using submodular functions.
\newblock In {\em AAAI}, volume~7, pages 1650--1654, 2007.

\bibitem[KG14]{krause2014submodular}
Andreas Krause and Daniel Golovin.
\newblock Submodular function maximization.
\newblock {\em Tractability}, 3:71--104, 2014.

\bibitem[KKKS11]{kakade2011efficient}
Sham~M Kakade, Adam~Tauman Kalai, Varun Kanade, and Ohad Shamir.
\newblock Efficient learning of generalized linear and single index models with
  isotonic regression.
\newblock In {\em Proceedings of the 24th International Conference on Neural
  Information Processing Systems}, pages 927--935, 2011.

\bibitem[KS09]{kalai2009isotron}
Adam~Tauman Kalai and Ravi Sastry.
\newblock The isotron algorithm: High-dimensional isotonic regression.
\newblock In {\em COLT}. Citeseer, 2009.

\bibitem[KSWA15]{kveton2015cascading}
Branislav Kveton, Csaba Szepesvari, Zheng Wen, and Azin Ashkan.
\newblock Cascading bandits: Learning to rank in the cascade model.
\newblock In {\em International Conference on Machine Learning}, pages
  767--776. PMLR, 2015.

\bibitem[LB12]{lin2012learning}
Hui Lin and Jeff~A Bilmes.
\newblock Learning mixtures of submodular shells with application to document
  summarization.
\newblock {\em arXiv preprint arXiv:1210.4871}, 2012.

\bibitem[LKG{\etalchar{+}}07]{leskovec2007cost}
Jure Leskovec, Andreas Krause, Carlos Guestrin, Christos Faloutsos, Jeanne
  VanBriesen, and Natalie Glance.
\newblock Cost-effective outbreak detection in networks.
\newblock In {\em Proceedings of the 13th ACM SIGKDD international conference
  on Knowledge discovery and data mining}, pages 420--429, 2007.

\bibitem[LLS19]{li2019online}
Shuai Li, Tor Lattimore, and Csaba Szepesv{\'a}ri.
\newblock Online learning to rank with features.
\newblock In {\em International Conference on Machine Learning}, pages
  3856--3865. PMLR, 2019.

\bibitem[LMNS09]{lee2009non}
Jon Lee, Vahab~S Mirrokni, Viswanath Nagarajan, and Maxim Sviridenko.
\newblock Non-monotone submodular maximization under matroid and knapsack
  constraints.
\newblock In {\em Proceedings of the forty-first annual ACM symposium on Theory
  of computing}, pages 323--332, 2009.

\bibitem[NGW{\etalchar{+}}21]{niazadeh2021online}
Rad Niazadeh, Negin Golrezaei, Joshua~R Wang, Fransisca Susan, and Ashwinkumar
  Badanidiyuru.
\newblock Online learning via offline greedy algorithms: Applications in market
  design and optimization.
\newblock In {\em Proceedings of the 22nd ACM Conference on Economics and
  Computation}, pages 737--738, 2021.

\bibitem[RKJ08]{radlinski2008learning}
Filip Radlinski, Robert Kleinberg, and Thorsten Joachims.
\newblock Learning diverse rankings with multi-armed bandits.
\newblock In {\em Proceedings of the 25th international conference on Machine
  learning}, pages 784--791, 2008.

\bibitem[RS14]{rakhlin2014online}
Alexander Rakhlin and Karthik Sridharan.
\newblock Online non-parametric regression.
\newblock In {\em Conference on Learning Theory}, pages 1232--1264. PMLR, 2014.

\bibitem[RS15]{pmlr-v40-Rakhlin15}
Alexander Rakhlin and Karthik Sridharan.
\newblock Hierarchies of relaxations for online prediction problems with
  evolving constraints.
\newblock In Peter Grünwald, Elad Hazan, and Satyen Kale, editors, {\em
  Proceedings of The 28th Conference on Learning Theory}, volume~40 of {\em
  Proceedings of Machine Learning Research}, pages 1457--1479, Paris, France,
  03--06 Jul 2015. PMLR.

\bibitem[RST15]{rakhlin2015online}
Alexander Rakhlin, Karthik Sridharan, and Ambuj Tewari.
\newblock Online learning via sequential complexities.
\newblock {\em J. Mach. Learn. Res.}, 16(1):155--186, 2015.

\bibitem[Sch03]{schrijver2003combinatorial}
Alexander Schrijver.
\newblock {\em Combinatorial optimization: polyhedra and efficiency},
  volume~24.
\newblock Springer Science \& Business Media, 2003.

\bibitem[SRY{\etalchar{+}}21]{sen2021top}
Rajat Sen, Alexander Rakhlin, Lexing Ying, Rahul Kidambi, Dean Foster, Daniel
  Hill, and Inderjit Dhillon.
\newblock Top-$ k $ extreme contextual bandits with arm hierarchy.
\newblock {\em arXiv preprint arXiv:2102.07800}, 2021.

\bibitem[SVW17]{sviridenko2017optimal}
Maxim Sviridenko, Jan Vondr{\'a}k, and Justin Ward.
\newblock Optimal approximation for submodular and supermodular optimization
  with bounded curvature.
\newblock {\em Mathematics of Operations Research}, 42(4):1197--1218, 2017.

\bibitem[YG11]{yue2011linear}
Yisong Yue and Carlos Guestrin.
\newblock Linear submodular bandits and their application to diversified
  retrieval.
\newblock Neural Information Processing Systems, 2011.

\bibitem[YJ08]{yue2008predicting}
Yisong Yue and Thorsten Joachims.
\newblock Predicting diverse subsets using structural svms.
\newblock In {\em Proceedings of the 25th international conference on Machine
  learning}, pages 1224--1231, 2008.

\bibitem[ZNS{\etalchar{+}}16]{zong2016cascading}
Shi Zong, Hao Ni, Kenny Sung, Nan~Rosemary Ke, Zheng Wen, and Branislav Kveton.
\newblock Cascading bandits for large-scale recommendation problems.
\newblock {\em arXiv preprint arXiv:1603.05359}, 2016.

\end{thebibliography}

\end{document}